\begin{document}

\title{Intermittently Observable Markov Decision Processes}

\author{\name Gongpu Chen \email gongpu.chen@imperial.ac.uk \\
	 \addr Department of Electrical and Electronic Engineering\\
	   Imperial College London\\
	   SW7 2AZ, London, UK \\
        \name Soung Chang Liew \email soung@ie.cuhk.edu.hk \\
        \addr Department of Information Engineering\\
       The Chinese University of Hong Kong\\
       Shatin, Hong Kong SAR, China  
}

\editor{My editor}

\maketitle

\begin{abstract}%   <- trailing '%' for backward compatibility of .sty file
This paper investigates MDPs with intermittent state information. We consider a scenario where the controller perceives the state information of the process via an unreliable communication channel. The transmissions of state information over the whole time horizon are modeled as a Bernoulli lossy process. Hence, the problem is finding an optimal policy for selecting actions in the presence of state information losses. We first formulate the problem as a belief MDP to establish structural results. The effect of state information losses on the expected total discounted reward is studied systematically. Then, we reformulate the problem as a tree MDP whose state space is organized in a tree structure. Two finite-state approximations to the tree MDP are developed to find near-optimal policies efficiently. Finally, we put forth a nested value iteration algorithm for the finite-state approximations, which is proved to be faster than standard value iteration. Numerical results demonstrate the effectiveness of our methods.
\end{abstract}

\begin{keywords}
  MDP, state information losses, truncated approximation, structural results, nested value iteration.
\end{keywords}

\section{Introduction}
\label{sec:introduction}
Markov  decision process (MDP) is a widely adopted model for sequential decision-making in discrete-time stochastic control problems. At each time step, the process is in some state $s$, and a controller is responsible for selecting an action $a$ from the set of available actions of state $s$. Upon the execution of action $a$, the process stochastically transitions to a new state $s'$ at the next time step and generates a reward $r(s,a)$. Solving the MDP consists in finding an optimal policy of selecting actions for the controller to maximize the total reward over a time horizon. It is well-known that the current state of the process is sufficient for computing the optimal action at any time step \citep{puterman1994}. We thus refer to a policy as a mapping from state space to action space.

At every time point of decision-making, the controller needs the current state information to determine the action to be applied. In the classical setting, the current state information is assumed to be always available to the controller. This assumption, however, is not valid in many practical applications. A typical example is a situation where the controller relies on a remote sensor for perceiving the state information of the process; at each time step, the remote sensor observes the current state of the process and transmits the information to the controller via wireless communication. Such a situation is ubiquitous nowadays as wireless sensor network technology is being rapidly developed and deployed \citep{yick2008wireless,kandris2020applications}. {However, despite its many advantages, wireless communication is unreliable in various scenarios. In particular, transmissions over a wireless channel may occasionally fail due to  factors such as channel fading, environmental interference, energy constraints, and failed of channel access \citep{tse2005fundamentals}.} As a result, the state information can not always be delivered successfully from the remote sensor to the controller; hence the controller has to select actions for the MDP in the presence of state information losses. We refer to this problem as intermittently observable MDP (IOMDP) and investigate it in this paper from theoretical and algorithmic perspectives.

Studies on MDPs with imperfect information transmissions can be traced back to \cite{brooks1972markov}, who investigated MDPs with one-step delayed state information. Continuing along this line of research, \cite{Katsi2003} extended the framework and showed that an MDP with delays (whether constant or random delays) can be reduced to an MDP without delays, differing only in the size of state space.  \cite{Goldsmith2012} studied a group of coupled MDPs with delayed state information and provided a bound on the finite history of observations and control needed for the optimal control. In addition, state information delays have also been considered in partially observable MDPs (POMDPs) \citep{kim1987,bander1999} and decentralized control problems \citep{hsu1982decentralized,varaiya1978}.

While delayed state information has been extensively investigated, intermittent state information has not attracted particular attention in MDP studies. Perhaps this is because IOMDPs naturally fall into the category of POMDPs. 
{Specifically, in an IOMDP, each observation either provides complete knowledge of the state (when state information is received) or no information at all (when state information is lost).}
The mainstream method for treating a POMDP is to reformulate it as a fully observable MDP by constructing belief states \citep{astrom1965POMDP,krishnamurthy2016POMDP}. The resulting MDP is thus sometimes called belief MDP \citep{kaelbling1998}.
Under this formulation, it can be proved that the value function of the belief MDP is a piecewise linear and convex function of the belief state \citep{smallwood1973,white1980,araya2010pomdp}.
This nice property has been widely exploited in the design of POMDP algorithms, resulting in the one-pass algorithm \citep{smallwood1973}, the linear support algorithm \citep{Cheng_1988}, incremental pruning \citep{cassandra2013incremental}, the duality-based approach \citep{zhang2010}, and others. In addition, there are also many algorithms that approximate the exact value iteration solution, including the point-based value iteration \citep{pineau2003PBIV,porta2006point}, heuristic search value iteration \citep{smith2012heuristic}, and others \citep{shani2007forward,kurniawati2008sarsop,poupart2011closing}. 

{Given the complexity of solving the belief MDP due to its uncountable belief space, some studies mitigate this challenge through finite-memory approximation and belief quantization \citep{saldi2017asymptotic,zhou2010solving,yu2008near}.}
Most recently, \cite{kara2022near} demonstrated the near optimality of finite-memory policies for POMDPs under certain conditions by quantizing the belief space. A similar study in \cite{Golowich_STOC} focused on the $\gamma$-observable POMDPs. Furthermore, \cite{kara2023convergence} extended analysis to general POMDPs, including those with continuous state space, and derived a bound for the finite memory approximation in terms of a uniform filter stability error. 
These findings inspire the development of finite-memory policies for POMDPs. 
However, these finite-memory policies may be inefficient for IOMDPs. There are cases where the controller may receive no state information over the past  $n$ steps. In such situations, the most recent  $n$ steps of memory provides little information about the current state, as it contains only the past  $n-1$  actions, resulting in a large filter stability error and poor performance. Motivated by this, we develop more efficient algorithms that exploit the unique structure of IOMDPs.
%In contrast to these general approaches, this paper leverages the specific structure of IOMDPs to develop efficient algorithms for finding near-optimal policies.

%Although there is a rich set of algorithms for POMDPs, finding optimal or near-optimal policies is still computationally complex. A major reason is the so-called curse of dimensionality --- given a problem with $n$ physical states, the belief state space of the associated POMDP is the $(n-1)$-dim probability simplex, which is a continuous space. 
%As a result, general POMDP algorithms are inefficient for solving IOMDPs. Fortunately, this paper finds that the special structure of the problem can be exploited to find near-optimal policies efficiently.
% These results are valid for general POMDPs. This paper exploits the special structure of IOMDPs to develop efficient algorithms for finding near-optimal policies.

In this paper, we model the transmissions of state information as a Bernoulli process whose parameter (i.e., the state information reception probability) is determined by the communication environment. We first adopt the belief MDP formulation to analyze the effect of state information losses on the optimal value. Then, we reformulate the IOMDP using the sufficient history---a segment of history information that is sufficient for making the optimal decision. The set of sufficient histories can be organized in a tree structure. We thus call the new formulation a tree MDP. Based on the tree MDP formulation, we propose two efficient finite-state approximations for the problem to find near-optimal policies. 

 Our approach differs from existing finite-memory approximations \citep{kara2022near, Golowich_STOC, kara2023convergence} in both model construction and the resulting error bound. First, while prior works estimate the belief state using the most recent $n$ steps of memory, we construct an approximate MDP using the $n$ steps of memory following the last instance of state information reception.  This distinction is crucial because, unlike in standard POMDPs, the most recent $n$ steps may not always be the most informative memory in IOMDPs.
	Second, the upper bounds on value approximation error and performance loss in \citep{kara2022near,Golowich_STOC,kara2023convergence} are derived in terms of an upper bound on the filter stability error, which depends on how quickly the system forgets its initial distribution. In contrast, our method does not rely on filter stability. Instead, we establish a value approximation error bound and an $\epsilon$-optimality condition that depend on the probability of state information reception (see Theorems \ref{thm:TAL-bound} and \ref{thm:eps-optimal}). This highlights IOMDPs as an interesting special case of POMDPs.

The contributions of this paper are as follows:
\begin{itemize}
	\item We show that the optimal value monotonically increases with the state information reception probability and provide a bound for the performance regret caused by state information losses.
	
	\item We reformulate the IOMDP as a tree MDP and propose a truncated approximation to the tree MDP for finding near-optimal policies. A theoretical bound for the approximation error is derived to better understand the method.
	
	\item We put forth the high-order truncated approximation---a modified truncated approximation that could identify the redundant states (those that will never be visited) and omit them when computing policies. It is more efficient and scalable than the original truncated approximation because of the reduction of state space. 
	
	\item We propose a variant of value iteration, called nested value iteration, for the truncated approximation of the tree MDP. We   show that the nested value iteration converges to the optimal value function faster than the standard value iteration. It also applies to the high-order truncated approximation. 
\end{itemize}

The rest of the paper is organized as follows. Section~\ref{sec:Formulation} presents the preliminaries and problem formulation. Section \ref{sec:Structural} establishes fundamental structural results. Section \ref{sec:TA} introduces the tree MDP formulation and proposes the truncated approximation. Section \ref{sec:HOTA} develops the high-order truncated approximation. Section \ref{sec:NVI} proposes the nested value iteration algorithm. Section \ref{sec:exp} demonstrates the experiment results. Finally, Section \ref{sec: conl} concludes this paper.

\section{Problem Statement}  \label{sec:Formulation}
\subsection{Preliminary: MDP}
An MDP is defined by a tuple $(\mathcal{S}, \mathcal{A}, P, r, \beta)$, where $\mathcal{S}$ is the state space, $\mathcal{A}$ is the action space, $P$ is the transition kernel, $r: \mathcal{S}\times\mathcal{A}\to \mathbb{R}$ is the reward function, $\beta\in [0,1)$ is the discount factor. This paper assumes that both $\mathcal{S}$ and $\mathcal{A}$ are finite sets, and that the function $r$ is bounded. For simplicity, we index the states and actions by letting ${\cal S} = \{ 1,2, \cdots ,|{\cal S}|\} $ and ${\cal A} = \{ 1,2, \cdots ,|{\cal A}|\} $. Let $s_t$ and $a_t$ denote the state and action at time $t$. The following expressions will be used interchangeably: 
\begin{align*}
	{P_a}(i,j) = P(j|i,a) = \Pr\left( {{s_{t + 1}} = j|{s_t} = i,{a_t} = a} \right),
\end{align*}
for any $ a \in {\cal A}$ and $ i,j \in {\cal S}$.
We will use ${P_a} = {[{P_a}(i,j)]_{i,j}}$ to denote the transition matrix associated with action $a$. Solving the MDP consists in finding an optimal policy that maximizes the expected total discounted reward
\begin{align}
	V(s) \buildrel \Delta \over = \max E\left[ {\sum\limits_{t = 0}^\infty  {{\beta ^t}r({s_t},{a_t})|{s_0} = s} } \right],\quad s \in {\cal S}.
\end{align}
More generally, given a distribution $\theta$ of the initial state, we can define 
\begin{align}  \label{eq:Jtheta}
	J(\theta ) = \sum\limits_{s \in {\cal S}} {\theta (s)V(s)} ,\quad \theta  \in {\Delta _{\cal S}},
\end{align}
where $\Delta_{\cal S}$ is the ($|\mathcal{S}|-1$)-dim probability simplex defined as
\begin{align*}
	{\Delta _{\cal S}} \triangleq \left\{ {\theta  \in {\mathbb{R}^{|{\cal S}|}}:\sum\limits_{i = 1}^{|{\cal S}|} {\theta (i)}  = 1,\theta (i) \ge 0,\forall i} \right\}.
\end{align*}
At each time $t$, the state information $s_t\in\mathcal{S}$ is transmitted to the controller, based on which the controller computes an action $a_t$  and applies $a_t$  to the process. Upon the execution of $a_t$, the process generates a reward $r(s_t,a_t)$ and transitions to state $s_{t+1}$ at the next time step. In the classical setting, the transmissions of $s_t$ are always timely and reliable. Then the optimal policy for the MDP can be determined by the Bellman equation:
\begin{align*}
	V(s) = \mathop {\max }\limits_{a \in {\cal A}} \left\{ {r(s,a) + \beta \sum\limits_{y \in {\cal S}} {P(y|s,a)V(y)} } \right\},\quad s \in {\cal S}.
\end{align*}

\subsection{Intermittently Observable MDP}
Consider a scenario where the controller relies on a remote sensor for perceiving the state information of the process. Suppose that the controller and the remote sensor are physically separated and communicate via an unreliable wireless channel. Due to environmental interference and channel fading, transmissions over the wireless channel may occasionally fail. As a result, the state information observed by the sensor may not be always delivered successfully to the controller. This paper proposes the IOMDP framework to address the problem of selecting actions for the MDP with intermittent state information.

Formally, define a Bernoulli random variable $\gamma_t$ as an indicator for the transmission of $s_t$. Let $\gamma_t = 1$ if $s_t$ is successfully transmitted to the controller before the time point of determining $a_t$ and $\gamma_t = 0$ otherwise. We assume there is no delayed arrival of state information, i.e., the controller will never receive $s_t$ if $\gamma_t=0$. We make the following assumption regarding the transmissions of state information:

\noindent {\bf Assumption 1:} {\it
	$\{ {\gamma _t}:t = 0,1,2, \cdots \} $ is a sequence of i.i.d. random variables and $\Pr ({\gamma _t} = 1) = \rho  \in (0,1]$ for all $t\ge 0$. }

Quantity $\rho$ is referred to as the state information reception probability (SIRP). An IOMDP with SIRP $\rho$ is defined by the tuple $(\mathcal{S}, \mathcal{A}, \rho, P, r, \beta)$.

To analyze the IOMDP, we reformulate the problem as a belief MDP denoted by ${\cal B}(\rho ) = ({\cal W},{\cal A},\rho ,T,R,\beta )$. Specifically, ${\cal W}$ is the set of belief states, $T$ is the transition kernel of belief states, $R$ is the reward function. The remaining symbols are of the same meaning as before.
The belief MDP is a special kind of POMDP, in which we use a belief state to represent a probability distribution over the state space $\mathcal{S}$. Let $t_i$ denote the time index of the  $i$-th successfully transmitted state information. Due to possible transmission failures, $t_{i+1} - t_i$ may be greater than 1. Suppose there are in total $k$  successful transmissions up to time $t$ (i.e., $k=\sum_{i=0}^{t}\gamma_i$), then $\{s_{t_i}:1\le i \le k \}$ is the set of all the state information received by the controller up to time $t$. The belief state at time $t$, denoted by $\mathbf{b}_t$, is a sufficient statistic for the given history:
\begin{align} \label{eq:belief}
	{{\bf{b}}_t} &\triangleq \Pr \left( {{s_t}|{s_{{t_1}}},{s_{{t_2}}}, \cdots ,{s_{{t_k}}},{a_0},{a_1}, \cdots ,{a_{t - 1}}} \right) \notag \\
	&= \Pr \left( {{s_t}|{s_{{t_k}}},{a_{{t_k}}},{a_{{t_k} + 1}}, \cdots ,{a_{t - 1}}} \right).
\end{align}
The second equality above follows from the Markovian property. It means that the belief state depends on the latest received state information and the following actions. We will refer to the sequence $({s_{{t_k}}},{a_{{t_k}}},{a_{{t_k} + 1}}, \cdots ,{a_{t - 1}})$  as the sufficient history at time $t$. Note that ${{\bf{b}}_t} \in {\Delta _{\cal S}}$ with its $i$-th element, denoted by $\mathbf{b}_t(i)$, being the probability of $s_t=i$ conditioned on the sufficient history. Specially, if $t_k=t$ (i.e., the controller receives $s_t$), then $\mathbf{b}_t$ reduces to a one-hot vector. We will use $\mathbf{e}_i$ to denote the one-hot vector with the $i$-th element being 1 and other elements being 0. If $s_t$ is not delivered successfully, then $\mathbf{b}_t$ is determined by $\mathbf{b}_{t-1}$ and $a_{t-1}$. In particular, the transition probability of the belief MDP is given by
\begin{align} \label{eq:kernel_T}
	T({{\bf{b}}_t}|{{\bf{b}}_{t - 1}},{a_{t - 1}}) \triangleq \Pr \left( {{{\bf{b}}_t}|{{\bf{b}}_{t - 1}},{a_{t - 1}}} \right) 
	= \begin{cases}
		1 - \rho , &{\text{ if }}{{\bf{b}}_t} = P_{{a_{t - 1}}}^\top{{\bf{b}}_{t - 1}}\\
		\rho {[P_{{a_{t - 1}}}^\top{{\bf{b}}_{t - 1}}]_i}, &{\text{ if }}{{\bf{b}}_t} = {{\bf{e}}_i},i \in {\cal S}\\
		0,&{\rm{                   otherwise}}\quad 
	\end{cases} .
\end{align}
where ${[P_{{a_{t - 1}}}^\top{{\bf{b}}_{t - 1}}]_i}$ denotes the $i$-th element of vector $P_{{a_{t - 1}}}^\top{{\bf{b}}_{t - 1}}$. Finally, the reward function of the belief MDP is given by
\begin{align*}
	R({{\bf{b}}_t},{a_t}) \triangleq \sum\limits_{s \in {\cal S}} {{{\bf{b}}_t}(s)r(s,{a_t})} ,\quad {{\bf{b}}_t} \in {\cal W},{a_t} \in {\cal A}.
\end{align*}
The belief MDP can be viewed as a fully observable MDP; hence its optimal policy can be determined by the Bellman equation:
\begin{align*}
	\phi ({\bf{b}},\rho ) = \mathop {\max }\limits_{a \in {\cal A}} \left\{ {R({\bf{b}},a) + \beta \sum\limits_{{\bf{b'}} \in {\cal W}} {T\left( {{\bf{b'}}|{\bf{b}},a} \right)\phi ({\bf{b'}},\rho )} } \right\},
\end{align*}
where ${\bf{b}} \in {\cal W}$, and $\phi ({\bf{b}},\rho )$ denotes the optimal value function of ${\cal B}(\rho)$. In the context that $\rho$ is fixed, we may omit $\rho$ and simply write the value function as $\phi ({\bf{b}} )$. For every $\mathbf{b} \in {\cal W}$ and $a\in {\cal A}$, $R(\mathbf{b},a)$ can be viewed as the expected one-step reward generated by the underlying MDP given that the current state follows a distribution $\mathbf{b}$. Therefore, $\phi ({\bf{b}},\rho )$ is the maximum expected total discounted reward that can be obtained from the underlying MDP with an initial belief state $\mathbf{b}$ and SIRP $\rho$.  Since $r(s,a)$ is a bounded function, $R(\mathbf{b},a)$ is also bounded. Therefore, it is well-known that an optimal policy for the belief MDP always exists \citep{puterman1994}.

In principle, the belief state can be any element in $ {\Delta _{\cal S}}$ (i.e., ${\cal W} = {\Delta _{\cal S}}$). Then the belief MDP has a continuous state space. However, as implied by \eqref{eq:kernel_T}, given any initial belief state $\theta$, the set of possible belief states is countable, i.e.,
\begin{align*}
	{\Omega _\theta } \buildrel \Delta \over = \left\{ {\prod\limits_{k = 0}^L {P_{{a_k}}^\top} {\bf{b}}:{a_k} \in {\cal A},{\bf{b}} \in \left\{ {\theta ,{{\bf{e}}_i}:i \in {\cal S}} \right\},L = 0,1,2, \cdots } \right\}.
\end{align*}
Unless otherwise specified, we will consider ${\cal W} = {\Omega _\theta }$  instead of ${\Delta _{\cal S}}$  to avoid technique issues related to measurability.

\section{Structural Results}  \label{sec:Structural}
This section establishes structural results that are fundamental for understanding IOMDPs. The aim is to identify the relationship between the state information reception probability and the optimal value function.

Let $\Pi$ denote the set of stationary deterministic policies for the belief MDP. It is well-known that $\Pi$ contains at least one optimal policy; hence we will focus on policies in $\Pi$. For any $\pi\in \Pi$, let $\pi(\mathbf{b})$ denote the action taken by policy $\pi$ in belief state $\mathbf{b}$. Denote by ${\phi ^\pi }( \cdot ,\rho )$ the value function of ${\cal B}(\rho)$ under policy $\pi$ and $T^\pi_\rho$ the associated transition matrix. Their relationship can be expressed in vector form as
\begin{align}
	{\phi ^\pi }[\rho ] = {R_\pi } + \beta T_\rho ^\pi {\phi ^\pi }[\rho ],
\end{align}
where ${\phi ^\pi }[\rho ] = {[{\phi ^\pi }({\bf{b}},\rho )]_{{\bf{b}} \in {\cal W}}}$ is the vector form of the value function over the belief state space ${\cal W}$, and ${R_\pi } = {[R({\bf{b}},\pi ({\bf{b}}))]_{{\bf{b}} \in {\cal W}}}$ is the vector of rewards associated with policy $\pi$. 
Likewise, we will denote by $\phi[\rho]$ the vector form of the optimal value function of ${\cal B}(\rho)$.

We first introduce a useful lemma. For any $\rho\in (0,1]$, ${\cal B}(\rho)$ is a POMDP whose reward function is linear with the belief state. The piecewise linearity and convexity of the optimal value function have been extensively exploited in POMDPs of this kind \cite{smallwood1973}. Formally,
\begin{lemma}
	For any $\rho\in (0,1]$,  $\phi ({\bf{b}},\rho )$ is a piecewise linear and convex function of $\mathbf{b} \in \Delta_{\cal S}$.
\end{lemma}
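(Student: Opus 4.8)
The plan is to prove the claim by value iteration, showing that the Bellman operator of $\mathcal{B}(\rho)$ preserves the piecewise-linear-and-convex (PWLC) structure and then passing to the limit. I would first record the standard $\alpha$-vector characterization: a function $V:\Delta_{\mathcal S}\to\mathbb R$ is PWLC if and only if there exists a finite set of vectors $\Gamma\subset\mathbb R^{|\mathcal S|}$ with $V(\mathbf b)=\max_{\alpha\in\Gamma}\mathbf b^\top\alpha$, since a maximum of finitely many linear functionals is exactly a convex function with finitely many linear pieces. Establishing the lemma then amounts to exhibiting such a representation for the value function.

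Next I would rewrite the Bellman operator in a form adapted to the kernel \eqref{eq:kernel_T}. Writing $r_a=[r(s,a)]_{s\in\mathcal S}$ so that $R(\mathbf b,a)=\mathbf b^\top r_a$, and substituting \eqref{eq:kernel_T}, the operator $H$ acting on a value function $V$ reads
\[
(HV)(\mathbf b)=\max_{a\in\mathcal A}\Big\{\mathbf b^\top r_a+\beta\rho\sum_{i\in\mathcal S}[P_a^\top\mathbf b]_i\,V(\mathbf e_i)+\beta(1-\rho)\,V(P_a^\top\mathbf b)\Big\}.
\]
The first term is linear in $\mathbf b$. Because $[P_a^\top\mathbf b]_i=\sum_s P_a(s,i)\mathbf b(s)$, the middle term equals $\beta\rho\,\mathbf b^\top(P_a v)$ with $v=[V(\mathbf e_i)]_{i\in\mathcal S}$, hence is also linear in $\mathbf b$. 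Only the last term, in which $V$ is evaluated at the transformed belief $P_a^\top\mathbf b$, requires the inductive hypothesis.

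The induction step is the crux. Assuming $V(\mathbf b)=\max_{\alpha\in\Gamma}\mathbf b^\top\alpha$ is PWLC, I would observe that $V(P_a^\top\mathbf b)=\max_{\alpha\in\Gamma}(P_a^\top\mathbf b)^\top\alpha=\max_{\alpha\in\Gamma}\mathbf b^\top(P_a\alpha)$, which is again PWLC with vector set $\{P_a\alpha:\alpha\in\Gamma\}$. Adding the two linear terms (each a single $\alpha$-vector) and multiplying by the nonnegative constants $\beta\rho$ and $\beta(1-\rho)$ preserves PWLC, and the outer maximization over the finite action set $\mathcal A$ is a maximum of finitely many PWLC functions, hence PWLC. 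Therefore $HV$ is PWLC whenever $V$ is. Taking $\phi_0\equiv0$ (PWLC, with $\Gamma_0=\{\mathbf 0\}$) and $\phi_{n+1}=H\phi_n$, induction yields that every finite-horizon value function $\phi_n$ is PWLC.

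Finally I would pass to the limit. Since $\beta\in[0,1)$, $H$ is a $\beta$-contraction in the supremum norm, so $\phi_n\to\phi(\cdot,\rho)$ uniformly on $\Delta_{\mathcal S}$, and since pointwise limits of convex functions are convex, convexity of $\phi(\cdot,\rho)$ follows at once. The main obstacle is the piecewise-linearity of the limit: the cardinality $|\Gamma_n|$ may grow with the horizon $n$, so a priori $\phi$ is only a supremum of countably many linear functionals rather than a maximum of finitely many. I would resolve this exactly as in the classical POMDP analysis of \citep{smallwood1973}: establish PWLC for each finite horizon $\phi_n$ as above, and observe that the supporting-hyperplane (convex, linearly generated) structure is inherited by the uniform limit, which is the sense in which $\phi(\cdot,\rho)$ is PWLC and, notably, the convexity is the property actually invoked in the structural results that follow.
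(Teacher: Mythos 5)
Your induction is the standard Smallwood--Sondik argument, and it is the right instinct here: the paper itself gives no proof of this lemma at all, simply appealing to \citep{smallwood1973}, so your reconstruction supplies the argument that the citation stands for. The finite-horizon part of your proof is correct and complete: the $\alpha$-vector representation, the key identity $V(P_a^\top\mathbf{b})=\max_{\alpha\in\Gamma}\mathbf{b}^\top(P_a\alpha)$, and closure of the PWLC class under addition of linear terms, nonnegative scaling, and finite maxima together show that every $\phi_n$ is PWLC.

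However, your last step contains a genuine gap that you half-acknowledge and then talk around. A uniform limit of PWLC functions is convex and continuous, but it is not in general piecewise linear: the cardinality of $\Gamma_n$ can grow without bound as $n\to\infty$, and for infinite-horizon discounted POMDPs the optimal value function genuinely can fail to have finitely many linear pieces (this is precisely why Sondik's infinite-horizon theory requires the ``finitely transient'' condition to recover exact piecewise linearity). So the assertion that ``the supporting-hyperplane structure is inherited by the uniform limit'' establishes convexity only; declaring that to be ``the sense in which $\phi(\cdot,\rho)$ is PWLC'' redefines the conclusion rather than proving it. To be fair, this imprecision originates in the lemma statement itself---claiming exact piecewise linearity for the infinite-horizon value function is a common abuse in the POMDP literature---and your closing observation is exactly the remark that rescues the paper's development: only convexity is invoked downstream, e.g.\ in the proof of Theorem~\ref{theo:phi_b}, where the inequality $\sum_{i\in{\cal S}}\tau(i)\phi(\mathbf{e}_i,\rho)-\phi(\tau,\rho)\ge 0$ follows from convexity alone. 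A fully rigorous version of your argument would either weaken the lemma to ``convex, continuous, and a uniform limit of PWLC functions,'' or restrict the PWLC claim to finite horizons.
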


With Lemma 1, we are ready to establish a key result that reveals the relationship between the optimal value function and the SIRP.
\begin{theorem} \label{theo:phi_b}
	For any $\mathbf{b} \in {\cal W}$, $\phi ({\bf{b}},\rho )$ is an increasing function of $\rho\in (0,1]$.
\end{theorem}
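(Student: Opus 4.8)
The plan is to establish the monotonicity through value iteration, leveraging the convexity granted by Lemma 1. First I would rewrite the Bellman operator $\mathcal{T}_\rho$ of $\mathcal{B}(\rho)$ explicitly using the transition kernel \eqref{eq:kernel_T}. Writing $\mathbf{b}' = P_a^\top \mathbf{b}$ for the one-step propagated belief, the kernel collapses the sum over successor beliefs into exactly two contributions (no reception, weight $1-\rho$; reception of state $i$, weight $\rho\,\mathbf{b}'(i)$), so that
\begin{align*}
(\mathcal{T}_\rho \phi)(\mathbf{b}) = \max_{a\in\mathcal{A}} \left\{ R(\mathbf{b},a) + \beta\phi(\mathbf{b}') + \beta\rho\Big[ \sum_{i\in\mathcal{S}} \mathbf{b}'(i)\phi(\mathbf{e}_i) - \phi(\mathbf{b}') \Big] \right\}.
\end{align*}
The bracketed quantity is the crucial object: since $\mathbf{b}' = \sum_i \mathbf{b}'(i)\mathbf{e}_i$ is a convex combination of the vertices $\mathbf{e}_i$, convexity of $\phi$ (Lemma 1) gives $\phi(\mathbf{b}') \le \sum_i \mathbf{b}'(i)\phi(\mathbf{e}_i)$, so the bracket is nonnegative. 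Intuitively it is the expected value of receiving fresh state information, and $\rho$ multiplies precisely this nonnegative ``information gain.''

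Next I would run value iteration $\phi_{n+1} = \mathcal{T}_\rho\phi_n$ from $\phi_0 \equiv 0$, which converges to $\phi[\rho]$ by the contraction property, and prove by induction on $n$ the joint statement: for every $\mathbf{b}$, $\phi_n(\cdot,\rho)$ is convex in $\mathbf{b}$, and $\phi_n(\mathbf{b},\rho)$ is nondecreasing in $\rho$. Convexity is preserved at each step by the argument behind Lemma 1: the reward term and the vertex-average term are linear in $\mathbf{b}$, the map $\mathbf{b}\mapsto\phi_n(P_a^\top\mathbf{b})$ is convex as the composition of a convex function with a linear map, and the pointwise maximum over $a$ is convex. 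For the monotonicity step I would fix $\rho_1 \le \rho_2$ and an action $a$, and compare the two inner expressions by inserting the hybrid term $R(\mathbf{b},a) + \beta(1-\rho_1)\phi_n(\mathbf{b}',\rho_2) + \beta\rho_1\sum_i \mathbf{b}'(i)\phi_n(\mathbf{e}_i,\rho_2)$. The gap between the $\rho_2$-expression and this hybrid equals $\beta(\rho_2-\rho_1)\big[\sum_i\mathbf{b}'(i)\phi_n(\mathbf{e}_i,\rho_2) - \phi_n(\mathbf{b}',\rho_2)\big]\ge 0$ by convexity of $\phi_n(\cdot,\rho_2)$; the gap between the hybrid and the $\rho_1$-expression is a convex combination of the differences $\phi_n(\cdot,\rho_2) - \phi_n(\cdot,\rho_1)\ge 0$ supplied by the inductive monotonicity hypothesis. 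Both gaps being nonnegative, the inner expression is nondecreasing in $\rho$ for each $a$, and taking the maximum over $a$ preserves this, giving $\phi_{n+1}(\mathbf{b},\rho_1)\le\phi_{n+1}(\mathbf{b},\rho_2)$.

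Finally, letting $n\to\infty$ transfers the monotonicity to the limit $\phi(\mathbf{b},\rho)$, proving the claim. The main obstacle is that both the operator $\mathcal{T}_\rho$ and its fixed point depend on $\rho$, so one cannot simply fix a convex function and differentiate in $\rho$; the remedy is the two-term hybrid decomposition above, which cleanly separates the explicit $\rho$-dependence (handled by convexity) from the implicit dependence through $\phi_n$ (handled by the induction hypothesis). A secondary point to keep honest is that convexity must be carried along the induction jointly with monotonicity, since the monotonicity step consumes the convexity of the previous iterate.
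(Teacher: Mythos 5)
Your proof is correct, but it takes a genuinely different route from the paper. You argue by value iteration: starting from $\phi_0\equiv 0$, you carry a joint inductive hypothesis (convexity of $\phi_n(\cdot,\rho)$ in $\mathbf{b}$, monotonicity of $\phi_n(\mathbf{b},\rho)$ in $\rho$), split the one-step comparison via a hybrid term that isolates the explicit $\rho$-dependence from the implicit dependence through $\phi_n$, and pass to the limit. The paper instead uses a policy-coupling argument: it fixes an optimal policy $\pi_\rho$ for ${\cal B}(\rho)$, applies that same policy to ${\cal B}(v)$ with $v\ge\rho$, and derives the identity ${\phi^{\pi_\rho}}[v]-{\phi^{\pi_\rho}}[\rho]=\beta\left(I-\beta T_v^{\pi_\rho}\right)^{-1}\left(T_v^{\pi_\rho}-T_\rho^{\pi_\rho}\right){\phi^{\pi_\rho}}[\rho]$, where nonnegativity of the resolvent (Neumann series of a stochastic matrix) and convexity of the optimal value function (Lemma 1, applied once to the limit object rather than to each iterate) make the right-hand side nonnegative. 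The two approaches trade different things: the paper's fixed-policy identity is reused verbatim downstream---it supplies the second inequality of Corollary 1 and the quantitative regret bound of Theorem 2, which your qualitative induction does not give. Conversely, your argument is more elementary and self-contained: it avoids matrix inversion and any appeal to the existence of an optimal stationary policy for the countable-state belief MDP, and in re-deriving convexity of the iterates it effectively proves Lemma 1 along the way rather than assuming it. Your closing caveat is exactly right and worth emphasizing: convexity must be propagated jointly with monotonicity in the induction, since the monotonicity step consumes convexity of the previous iterate; dropping it would break the argument.
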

\begin{proof}
	See Appendix A.
\end{proof}

Theorem \ref{theo:phi_b} is consistent with our intuition. It shows that the optimal value function of the belief MDP is monotonically increasing with the state information reception probability. In other words, the more state information the controller can receive, the greater its expected total discounted reward from the underlying MDP. 

The following result is useful in robustness and worst-case performance analysis. 
\begin{corollary}
	For any $\rho\in (0,1]$, suppose that $\pi_\rho\in \Pi$ is an optimal policy for ${\cal B}(\rho)$. Then for any $v\in (0,\rho)$ and $u\in (\rho,1]$, ${\phi ^{{\pi _\rho }}}[v] \le {\phi ^{{\pi _\rho }}}[\rho ] \le {\phi ^{{\pi _\rho }}}[u]$. 
\end{corollary}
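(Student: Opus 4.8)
My plan is to recognize that both inequalities in the corollary are instances of a single monotonicity statement: evaluating the \emph{fixed} policy $\pi_\rho$ in an environment with a different SIRP preserves the ordering of the SIRPs. Since $\pi_\rho$ is optimal for ${\cal B}(\rho)$, we have $\phi^{\pi_\rho}(\cdot,\rho)=\phi(\cdot,\rho)$, so the function $g\buildrel \Delta \over=\phi^{\pi_\rho}(\cdot,\rho)$ is \emph{convex} on $\Delta_{\cal S}$ by Lemma 1. The main obstacle is that I cannot simply mimic the proof of Theorem~\ref{theo:phi_b}: that argument relies on piecewise linearity and convexity, which come from the maximization in the Bellman operator, whereas the fixed-policy value functions $\phi^{\pi_\rho}[u]$ and $\phi^{\pi_\rho}[v]$ have no maximization and need not be convex. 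The resolution is to use convexity only at the single anchor $\rho$ (through $g$) and then propagate inequalities via the monotone, contractive policy-evaluation operator.

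First I would rewrite the policy-evaluation equation by marginalizing the kernel \eqref{eq:kernel_T}. Writing $a=\pi_\rho(\mathbf{b})$, the policy-evaluation operator of an environment with SIRP $\eta$ acts on a bounded function $f$ over ${\cal W}$ by
\begin{align*}
({\cal L}_\eta f)(\mathbf{b}) = R(\mathbf{b},a) + \beta(1-\eta)\, f(P_a^\top \mathbf{b}) + \beta\eta \sum_{i\in{\cal S}} [P_a^\top \mathbf{b}]_i\, f(\mathbf{e}_i).
\end{align*}
This operator is affine, order-preserving (since $T_\eta^{\pi_\rho}\ge 0$ entrywise), and a $\beta$-contraction in the sup-norm with unique fixed point $\phi^{\pi_\rho}[\eta]$; hence $({\cal L}_\eta)^n f\to \phi^{\pi_\rho}[\eta]$ from any starting $f$. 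Note also that $g$ is the fixed point of ${\cal L}_\rho$, i.e. ${\cal L}_\rho g=g$.

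The key step uses convexity of $g$. Because $P_a^\top \mathbf{b}=\sum_i [P_a^\top \mathbf{b}]_i\,\mathbf{e}_i$ is a convex combination of the one-hot vertices, Jensen's inequality gives $g(P_a^\top \mathbf{b})\le \sum_i [P_a^\top \mathbf{b}]_i\, g(\mathbf{e}_i)$; that is, the ``observation'' branch dominates the ``no-observation'' branch. Subtracting ${\cal L}_\rho g=g$ then yields, for every $\mathbf{b}$,
\begin{align*}
({\cal L}_\eta g)(\mathbf{b}) - g(\mathbf{b}) = \beta(\eta-\rho)\Big[\sum_{i} [P_a^\top \mathbf{b}]_i\, g(\mathbf{e}_i) - g(P_a^\top \mathbf{b})\Big],
\end{align*}
where the bracket is nonnegative. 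Hence ${\cal L}_u g\ge g$ for $u>\rho$ and ${\cal L}_v g\le g$ for $v<\rho$.

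Finally I would propagate these one-step inequalities to the fixed points by monotone iteration. For the upper bound, $g\le {\cal L}_u g$ together with order-preservation gives a nondecreasing sequence $g\le {\cal L}_u g\le ({\cal L}_u)^2 g\le\cdots\to \phi^{\pi_\rho}[u]$, so $\phi^{\pi_\rho}[\rho]=g\le \phi^{\pi_\rho}[u]$. Symmetrically, ${\cal L}_v g\le g$ produces a nonincreasing sequence converging to $\phi^{\pi_\rho}[v]$, whence $\phi^{\pi_\rho}[v]\le g=\phi^{\pi_\rho}[\rho]$. (The lower bound also follows directly from the chain $\phi^{\pi_\rho}[v]\le \phi[v]\le \phi[\rho]=\phi^{\pi_\rho}[\rho]$ via optimality and Theorem~\ref{theo:phi_b}, but the monotone-iteration argument treats both bounds uniformly.) The only points needing care are that all iterates remain on ${\cal W}=\Omega_\theta$, which holds because $\Omega_\theta$ contains every $\mathbf{e}_i$ and is closed under $\mathbf{b}\mapsto P_a^\top\mathbf{b}$ independently of the SIRP, and that the Jensen step is applied on $\Delta_{\cal S}$ where Lemma 1 supplies convexity.
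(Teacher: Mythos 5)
Your proof is correct, and it takes a route that differs in a few genuine ways from the paper's. The paper disposes of the corollary in two lines: the upper bound $\phi^{\pi_\rho}[\rho] \le \phi^{\pi_\rho}[u]$ is read off from the proof of Theorem \ref{theo:phi_b}, where the identity $\phi^{\pi_\rho}[u]-\phi^{\pi_\rho}[\rho]=\beta(I-\beta T_u^{\pi_\rho})^{-1}(T_u^{\pi_\rho}-T_\rho^{\pi_\rho})\phi^{\pi_\rho}[\rho]$ combined with convexity shows the right-hand side is entrywise nonnegative; the lower bound comes from the chain $\phi^{\pi_\rho}[\rho]=\phi[\rho]\ge\phi[v]\ge\phi^{\pi_\rho}[v]$, i.e., Theorem \ref{theo:phi_b} plus optimality---exactly your parenthetical remark. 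Your argument instead treats both bounds symmetrically: a single one-step Jensen inequality at the anchor $\rho$, namely ${\cal L}_v g \le g = {\cal L}_\rho g \le {\cal L}_u g$, propagated to the fixed points by monotone iteration of the policy-evaluation contraction. The propagation mechanism is mathematically the same as the paper's nonnegative resolvent, since $(I-\beta T)^{-1}=\sum_{n\ge 0}\beta^n T^n$ is just the Neumann series underlying your monotone iteration; but your handling of the lower bound is genuinely different, as it never invokes monotonicity of the \emph{optimal} value function, making the corollary self-contained given Lemma 1 and standard policy-evaluation facts. You also correctly identified the one trap in this statement---the fixed-policy value functions $\phi^{\pi_\rho}[u]$ and $\phi^{\pi_\rho}[v]$ need not be convex, so convexity may be used only at $\rho$---which is precisely why the paper, too, anchors its computation of the difference term at $\phi^{\pi_\rho}[\rho]$. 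What the paper's version buys is brevity, since Theorem \ref{theo:phi_b} is already proved; what yours buys is a uniform argument for both inequalities and independence from the optimal-value monotonicity result.
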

\begin{proof}
	The second inequality has been established in the proof of Theorem \ref{theo:phi_b}. Since $\rho>v$, it follows from Theorem \ref{theo:phi_b} that ${\phi ^{{\pi _\rho }}}[\rho ] = \phi [\rho ] \ge \phi [v] \ge {\phi ^{{\pi _\rho }}}[v]$.
\end{proof}

Although straightforward, the lemma below is interesting and worth being highlighted. If we apply a value iteration algorithm to solve ${\cal B}(\rho)$ with any $\rho<1$, $\phi(\cdot,1)$ is a good starting point for the value iteration. As we know, a good starting point may considerably speed up the value iteration algorithm. Since the underlying MDP is much easier to be solved than the belief MDP, the following result can be used in practical computations to reduce computation time.
\begin{lemma} \label{lem:rho=1}
	For $\rho=1$,  the optimal value function $\phi(\cdot,1)$ can be derived from $V$. In particular,
	\begin{itemize}
		\item[1.] $\phi ({{\bf{e}}_i},1) = V(i)$, $\forall i \in {\cal S}$;
		\item[2.] For ${\bf{b}} \notin \{ {{\bf{e}}_i}:i \in {\cal S}\} $,
		\begin{align*}
			\phi ({\bf{b}},1) = \mathop {\max }\limits_{a \in {\cal A}} \left\{ {\sum\limits_{s \in {\cal S}} {{\bf{b}}(s)r(s,a)}  + \beta {{\bf{b}}^\top}{P_a}V} \right\}.
		\end{align*}
	\end{itemize}
\end{lemma}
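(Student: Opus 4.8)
The plan is to exploit the collapse of the transition kernel \eqref{eq:kernel_T} when $\rho=1$. First I would observe that setting $\rho=1$ annihilates the first branch of \eqref{eq:kernel_T} (since $1-\rho=0$), so that from \emph{any} belief state $\mathbf{b}$ under any action $a$, the only reachable successor belief states are the one-hot vectors $\mathbf{e}_j$, each attained with probability $[P_a^\top\mathbf{b}]_j$. This is the essential structural fact: with $\rho=1$ the controller recovers the true state at the very next step, so no non-degenerate belief state can persist beyond one transition.

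For part~1, I would restrict the Bellman equation for $\phi(\cdot,1)$ to the one-hot belief states. Substituting $\mathbf{b}=\mathbf{e}_i$ and using $R(\mathbf{e}_i,a)=r(i,a)$ together with the successor weights $[P_a^\top\mathbf{e}_i]_j=P(j|i,a)$, the Bellman equation for the family $\{\phi(\mathbf{e}_i,1)\}_{i\in\mathcal{S}}$ reads $\phi(\mathbf{e}_i,1)=\max_a\{r(i,a)+\beta\sum_{j}P(j|i,a)\phi(\mathbf{e}_j,1)\}$, which is formally identical to the Bellman equation that defines $V$ on the underlying MDP. Since the Bellman operator is a $\beta$-contraction on a complete space and hence has a unique fixed point, the vectors $[\phi(\mathbf{e}_i,1)]_i$ and $[V(i)]_i$ must coincide, yielding $\phi(\mathbf{e}_i,1)=V(i)$ for every $i$.

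For part~2, I would take a general $\mathbf{b}\notin\{\mathbf{e}_i:i\in\mathcal{S}\}$ and expand its Bellman equation a single step. Because every successor is a one-hot state $\mathbf{e}_j$ carrying weight $[P_a^\top\mathbf{b}]_j$, the continuation term reduces to $\beta\sum_{j\in\mathcal{S}}[P_a^\top\mathbf{b}]_j\,\phi(\mathbf{e}_j,1)$. Substituting the identity from part~1 and rewriting $\sum_{j}[P_a^\top\mathbf{b}]_j V(j)=(P_a^\top\mathbf{b})^\top V=\mathbf{b}^\top P_a V$, together with $R(\mathbf{b},a)=\sum_s\mathbf{b}(s)r(s,a)$, produces exactly the claimed formula.

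The only point requiring care is the uniqueness argument in part~1: two Bellman systems of identical form have identical solutions \emph{only} because the contractive Bellman operator admits a unique fixed point, so this must be invoked explicitly rather than assumed. Once that is in place, everything else is a direct substitution into the Bellman equation using the degenerate kernel, so I expect no further obstacle.
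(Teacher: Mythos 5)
Your proof is correct and follows essentially the same route as the paper: with $\rho=1$ the belief state collapses to one-hot vectors, so the restriction of the belief MDP to $\{\mathbf{e}_i\}$ coincides with the underlying MDP (giving part~1), and part~2 is a one-step Bellman expansion using part~1. The only difference is cosmetic --- you make explicit the fixed-point uniqueness of the Bellman operator where the paper simply says the belief MDP ``reduces to the original MDP,'' which is a welcome tightening but not a different argument.
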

\begin{proof}
	The initial belief state being ${\bf e}_i$ means that the controller has observed that the initial state of the underlying MDP is $s_0=i$. Since $\rho=1$, the controller can receives $s_t$ for all $t$. As a result, the belief state always remains in the set $\{ {{\bf{e}}_i}:i \in {\cal S}\} $ and the belief MDP reduces to the original MDP. Hence $\phi ({{\bf{e}}_i},1) = V(i),\forall i \in {\cal S}$. Statement 2 follows immediately from the Bellman equation and statement 1. 
\end{proof}

Clearly, $\phi ({{\bf{e}}_i},1) = V(i) = J({{\bf{e}}_i})$ for all $i\in {\cal S}$. It is interesting that, for any ${\bf{b}} \notin \{ {{\bf{e}}_i}:i \in {\cal S}\} $, $\phi ({\bf{b}},1)$ given by Lemma \ref{lem:rho=1} is not equal to $J({\bf b})$ defined in \eqref{eq:Jtheta}. Both of them consider that the initial state of the underlying MDP follows a distribution ${\bf b}$. However, $J({\bf b})$ in the classical setting assumes that the controller can observe exactly the initial state $s_0$ before it computes the first action $a_0$; hence $a_0$ depends on the specific realization of  $s_0$ and is always optimal. By contrast, $\phi ({\bf{b}},1)$ corresponds to the case that the controller does not know the exact initial state and needs to determine $a_0$ only based on ${\bf b}$; consequently, $a_0$ is likely to be sub-optimal for a particular realization of $s_0$. Therefore, $\phi ({\bf{b}},1) \le J({\bf{b}})$ and the difference $J({\bf{b}}) - \phi ({\bf{b}},1)$ is the performance regret of the belief MDP generated by the uncertainty at the first time step.

The above discussion is useful for understanding the next theorem, where we provide a bound for the performance regret caused by state information losses.
\begin{theorem}  \label{thm:regretbound}
	For any $\mathbf{b} \in {\cal W}$, $\phi ({\bf{b}},\rho )$ is continuous at any $\rho\in (0,1)$. In addition, denote by $\pi_1$ an optimal policy for ${\cal B}(1)$. Then, for any $\rho\in (0,1]$,
	\begin{align*}
		\phi [1] - \phi [\rho ] \le {\left( {I - \beta T_\rho ^{{\pi _1}}} \right)^{ - 1}}\eta, 
	\end{align*}
	where $\eta :{\cal W} \to \mathbb{R}$ is given by (let $\tau  = P_{{\pi _1}({\bf{b}})}^\top{\bf{b}}$)
	\begin{align*}
		\eta ({\bf{b}}) = \beta (1 - \rho )\left[ {J(\tau ) - \phi (\tau ,1)} \right],\quad {\bf{b}} \in {\cal W}.
	\end{align*}
\end{theorem}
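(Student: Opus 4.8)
The plan is to reduce both claims to a single perturbation identity for the value of a \emph{fixed} policy evaluated at two different SIRPs. For any $\pi \in \Pi$ and any $\rho, \rho' \in (0,1]$, subtracting the policy-evaluation equations $\phi^\pi[\rho'] = R_\pi + \beta T_{\rho'}^\pi \phi^\pi[\rho']$ and $\phi^\pi[\rho] = R_\pi + \beta T_\rho^\pi \phi^\pi[\rho]$ and inserting the telescoping term $\beta T_\rho^\pi \phi^\pi[\rho']$ gives $(I - \beta T_\rho^\pi)(\phi^\pi[\rho'] - \phi^\pi[\rho]) = \beta(T_{\rho'}^\pi - T_\rho^\pi)\phi^\pi[\rho']$. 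Since $T_\rho^\pi$ is stochastic and $\beta<1$, the operator $I - \beta T_\rho^\pi$ is invertible with nonnegative Neumann-series inverse $\sum_{k\ge 0}(\beta T_\rho^\pi)^k$, so
$$\phi^\pi[\rho'] - \phi^\pi[\rho] = \beta\,(I - \beta T_\rho^\pi)^{-1}(T_{\rho'}^\pi - T_\rho^\pi)\phi^\pi[\rho'].$$
Both parts then follow from a sandwich argument: choosing $\pi$ optimal for the \emph{larger} SIRP $\rho'$ yields $\phi[\rho'] = \phi^\pi[\rho']$ while $\phi[\rho] \ge \phi^\pi[\rho]$, hence $\phi[\rho'] - \phi[\rho] \le \phi^\pi[\rho'] - \phi^\pi[\rho]$, which the identity makes explicit.

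For the regret bound I would take $\rho' = 1$ and $\pi = \pi_1$ and evaluate $(T_1^{\pi_1} - T_\rho^{\pi_1})\phi^{\pi_1}[1]$ componentwise. Writing $a = \pi_1(\mathbf{b})$ and $\tau = P_a^\top \mathbf{b}$, the kernel \eqref{eq:kernel_T} shows that at $\rho=1$ the belief always jumps to a one-hot $\mathbf{e}_i$ with probability $\tau(i)$, whereas at general $\rho$ it stays at $\tau$ with probability $1-\rho$; the two operators therefore differ only through the $(1-\rho)$-weighted ``stay'' branch, and the $\mathbf{b}$-th component equals $(1-\rho)\bigl[\sum_i \tau(i)\phi^{\pi_1}(\mathbf{e}_i,1) - \phi^{\pi_1}(\tau,1)\bigr]$. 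Because $\pi_1$ is optimal for $\mathcal{B}(1)$, Lemma~\ref{lem:rho=1} gives $\phi^{\pi_1}(\mathbf{e}_i,1) = \phi(\mathbf{e}_i,1) = V(i)$, so $\sum_i \tau(i)\phi^{\pi_1}(\mathbf{e}_i,1) = \sum_i \tau(i)V(i) = J(\tau)$ by \eqref{eq:Jtheta}, while $\phi^{\pi_1}(\tau,1) = \phi(\tau,1)$. Multiplying by $\beta$ reproduces exactly $\eta(\mathbf{b}) = \beta(1-\rho)[J(\tau)-\phi(\tau,1)]$, and the identity together with the sandwich inequality yields $\phi[1] - \phi[\rho] \le (I-\beta T_\rho^{\pi_1})^{-1}\eta$.

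For continuity I would apply the identity for $\rho < \rho'$ with $\pi = \pi_{\rho'}$ optimal for $\mathcal{B}(\rho')$. The same componentwise computation shows $(T_{\rho'}^\pi - T_\rho^\pi)\phi^\pi[\rho']$ has $\mathbf{b}$-th entry $(\rho' - \rho)\bigl[\sum_i \tau(i)\phi^\pi(\mathbf{e}_i,\rho') - \phi^\pi(\tau,\rho')\bigr]$, whose magnitude is at most $(\rho'-\rho)\cdot 2R_{\max}/(1-\beta)$, where $R_{\max} = \max_{s,a}|r(s,a)|$ bounds every value. Using $\|(I-\beta T_\rho^\pi)^{-1}\|_\infty \le 1/(1-\beta)$ for stochastic $T_\rho^\pi$, the sandwich gives $0 \le \phi[\rho'] - \phi[\rho] \le \frac{2\beta R_{\max}}{(1-\beta)^2}(\rho'-\rho)$, the lower bound being Theorem~\ref{theo:phi_b}; this uniform Lipschitz estimate forces continuity at every $\rho \in (0,1)$.

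The main obstacle I anticipate is not the perturbation algebra but the exact collapse in the second paragraph: one must recognize that at $\rho=1$ the belief never lingers at $\tau$, isolate the single $(1-\rho)$-weighted branch in which the two operators disagree, and then correctly identify the surviving terms with $J(\tau)$ and $\phi(\tau,1)$ via Lemma~\ref{lem:rho=1}. A secondary point requiring care is justifying the Neumann-series inverse and the operator-norm bound on the countably infinite belief space $\mathcal{W} = \Omega_\theta$, which rests on the sup-norm contraction property of the discounted belief MDP rather than on finite-dimensional linear algebra.
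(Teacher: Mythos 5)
Your proposal is correct and follows essentially the same route as the paper's proof: the same policy-evaluation perturbation identity $\phi^\pi[\rho'] - \phi^\pi[\rho] = \beta\,(I - \beta T_\rho^\pi)^{-1}(T_{\rho'}^\pi - T_\rho^\pi)\phi^\pi[\rho']$, the same sandwich via optimality of the policy for the larger SIRP, the same componentwise evaluation of the operator difference, and the same use of Lemma~\ref{lem:rho=1} to identify $J(\tau)$ and $\phi(\tau,1)$. The only cosmetic difference is in the continuity step, where you bound the bracket term uniformly by $2R_{\max}/(1-\beta)$ to get an explicit Lipschitz constant, whereas the paper bounds it by the $v$-dependent (but bounded) quantity $\sigma_v$ obtained from convexity; both yield the same conclusion.
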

\begin{proof}
	See Appendix A.
\end{proof}

Basically, the upper bound for the performance regret is obtained by applying the optimal policy of ${\cal B}(1)$ (i.e., $\pi_1$) to the belief MDP ${\cal B}(\rho)$ and comparing the resulting value function with $\phi(\cdot,1)$. Start from a belief state ${\bf b}$, ${\cal B}(1)$ and ${\cal B}(\rho)$ generate the same instantaneous reward $R({\bf b}, \pi({\bf b}))$ when they are controlled by the same policy $\pi_1$; the performance regret comes from the second step. For ${\cal B}(1)$ under policy $\pi_1$, the belief state transitions to ${\bf e}_i$ with probability $\tau(i)$ for each $i\in {\cal S}$. By contrast, for ${\cal B}(\rho)$ under policy $\pi_1$, the belief state transitions to ${\bf e}_i$ with probability $\rho \tau(i)$ and transitions to $\tau$ with probability $1-\rho$. The regret comes from the latter transition, as illustrated above. We can think of the upper bound of $\phi[1] - \phi[\rho]$ stated in Theorem \ref{thm:regretbound}  as the expected total discounted regret of a Markov regret process (MRP). In particular, the MRP is governed by the transition matrix $T_\rho ^{{\pi _1}}$, and the regret at state ${\bf b}$ is $\eta({\bf b})$.

\section{Truncated Approximation} \label{sec:TA}
Although the belief MDP is much simpler than general POMDPs, it is still hard to solve exactly due to the countably infinite state space. We thus reformulate the IOMDP as a tree MDP, based on which we propose two finite-state approximations for finding near-optimal policies. In the following sections, we consider $\rho\in (0,1]$ to be fixed but arbitrary. We thus drop $\rho$ from notations to simplify the expressions. For example, $\phi ({\bf{b}},\rho )$ will be written as $\phi({\bf b})$ and the associated vector form will be written as $\phi$.

\subsection{Tree MDP}
We next reformulate the IOMDP to ease expositing of the finite-state approximations. As defined in \eqref{eq:belief}, the belief state at any time is determined by the sufficient history at the controller. In particular, a sufficient history, denoted by $h = (s,{u_{1} },{u_2}, \cdots ,{u_n})$, is a tuple consisting of a state $s\in {\cal S}$ and a sequence of $n$ ordered actions, where $n\in \{0,1,2,\cdots\}$ and $u_k\in {\cal A}$ for all $k$. When $n=0$, we will simply write $h=s$. 

{ Denote by ${\cal H}$ the set of all sufficient histories. This set can be hierarchically organized in a tree with $|{\cal S}|$ roots and infinitely many layers. In particular, each element of set  ${{\cal G}_0} : = {\cal S}$ corresponds to a root. For $n\in \{1,2,\cdots\}$, denote by ${\cal G}_n$ the set of sufficient histories consisting of 1 state followed by  $n$ actions. Then set ${\cal G}_n$ consists of the nodes at layer $n$ of the tree. Every sufficient history $h\in {\cal G}_n$ has $|{\cal A}|$ children, each corresponds to adding an extra action $a\in {\cal A}$ to the tuple, resulting in $(h,a)$. We then define $\{{\cal G}_n: n=0,1,2,\cdots\}$ recursively by
	\begin{align} \label{eq:Gn}
		{{\cal G}_{n + 1}} \buildrel \Delta \over = \left\{ {(h,a):h \in {{\cal G}_n},a \in {\cal A}} \right\}.
	\end{align}
	Fig.\ref{fig:tree} is an example of sufficient histories organized in a tree structure. Using the terminology in tree structures,  $h$ will be referred to as the parent node of $(h,a)$. Except for the root nodes (i.e., those in ${\cal G}_0$), each node in the tree has a unique parent. In addition, ${\cal G}_k$ and ${\cal G}_n$ are disjoint for any $n\neq k$ and ${\cal H} =  \cup _{k = 0}^\infty {{\cal G}_k}$. For any positive integer $n$, let ${\cal H}_n =  \cup _{k = 0}^n {{\cal G}_k}$ denote the set of nodes from layer 0 to layer $n$.
	
	According to the definition of belief states in \eqref{eq:belief}, there is a mapping $g:{\cal H} \to {\cal W}$ that determines the belief state for a given sufficient history. Specifically, for any $h=(s,{u_1},{u_2}, \cdots ,{u_n})  \in {\cal H}$,
	\begin{align} \label{eq:g(h)}
		g(h)  = \prod\limits_{k = 1}^n {P_{{u_k}}^\top} {{\bf{e}}_s} = P_{{u_n}}^\top P_{{u_{n - 1}}}^\top \cdots P_{{u_1}}^\top{{\bf{e}}_s},
	\end{align}
	where $\prod\nolimits_{k = 1}^n {P_{{u_k}}^\top}  = I$ for $n=0$. Note that $g$ is not necessarily a one-to-one mapping. It is possible that $g(h_1)=g(h_2)$ for $h_1\neq h_2$.
}
\begin{figure}[t]
	\centering
	\includegraphics[width=3.5in]{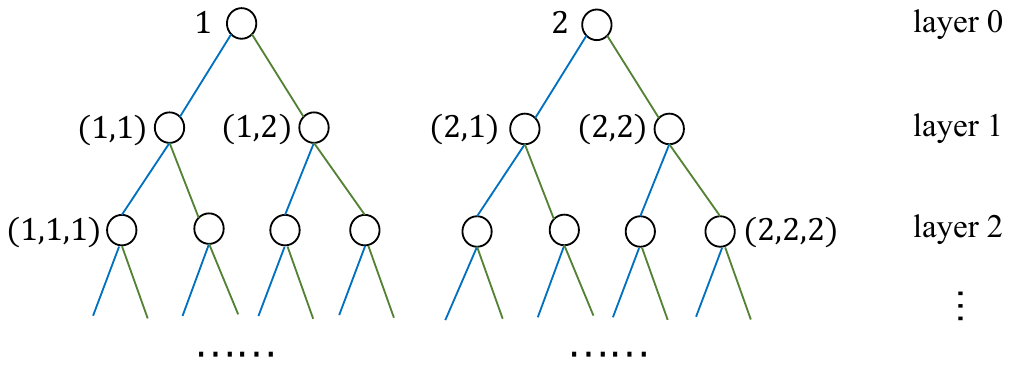}
	\caption{Sufficient histories organized in a tree structure (${\cal S}={\cal A}=\{1,2\}$).}
	\label{fig:tree}	
\end{figure}

Considering ${\cal H}$ as the state space, we can reformulate the IOMDP as an MDP denoted by ${\cal C}(\rho ) = ({\cal H},{\cal A},\rho ,D,{\bar R},\beta )$. In particular, the reward function is $\bar R(h,a): = R\left( {g(h),a} \right)$ and the transition kernel is given by (cf. eq.\eqref{eq:kernel_T})
\begin{align} \label{eq:kernel-D}
	D(h'|h,a) \buildrel \Delta \over = \Pr \left( {h'|h,a} \right) = \begin{cases}
		1 - \rho ,& {\text{if  }}h' = (h,a)\\
		\rho {[P_a^\top g(h)]_i},& {\text{if  }}h' = i \in {\cal S}\\
		0,&{\text{otherwise.}}
	\end{cases}
\end{align}
We call ${\cal C}(\rho)$ a {\bf tree MDP} because of the tree structure of its state space. To distinguish from the belief state, the state of the tree MDP at time $t$, denoted by $h_t\in {\cal H}$, will be referred to as the {\bf position state}. Let $\varphi :{\cal H} \to \mathbb{R}$ denote the optimal value function of ${\cal C}(\rho)$. We can easily identify the relationship between the two formulations of the IOMDP, as stated in the lemma below. The proof is provided in Appendix B for the sake of completeness.
\begin{lemma} \label{lem:2formulations}
	For the two formulations ${\cal B}(\rho)$ and ${\cal C}(\rho)$,
	\begin{itemize}
		\item [1.] $\phi({\bf b})=\varphi(h)$ if ${\bf b} = g(h)$;
		\item [2.] $\varphi(h) = \varphi(h')$ for any $h,h'\in {\cal H}$ satisfying $g(h) = g(h')$;
		\item [3.] Denote by $\pi$ and $\mu$ the optimal policies for ${\cal B}(\rho)$ and ${\cal C}(\rho)$, respectively. Then $\pi({\bf b})=\mu(h)$ if ${\bf b} = g(h)$.
	\end{itemize}
\end{lemma}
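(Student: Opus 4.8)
The plan is to treat each optimal value function as the unique fixed point of its MDP's Bellman operator, and to show that the map $g$ conjugates one operator into the other. Since $\mathcal{S}$ and $\mathcal{A}$ are finite, both reward functions $R$ and $\bar{R}$ are bounded; together with $\beta<1$, this makes each Bellman operator a $\beta$-contraction on the space of bounded functions over the relevant state space (with the sup-norm). Hence $\phi$ and $\varphi$ are the unique bounded solutions of their respective Bellman equations, and it suffices to check that a suitably relabeled copy of $\phi$ solves the tree-MDP Bellman equation.

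First I would record the compatibility of the two formulations under $g$. By definition $g((h,a)) = P_a^\top g(h)$, and for a root $i\in\mathcal{S}$ one has $g(i) = \mathbf{e}_i$. Comparing the tree-MDP kernel \eqref{eq:kernel-D} with the belief-MDP kernel \eqref{eq:kernel_T} and writing $\mathbf{b} = g(h)$, a transition $h\to(h,a)$ of probability $1-\rho$ corresponds to the belief transition $\mathbf{b}\to P_a^\top\mathbf{b}$ of equal probability, and a transition $h\to i$ of probability $\rho[P_a^\top g(h)]_i$ corresponds to $\mathbf{b}\to\mathbf{e}_i$ of equal probability. Combined with $\bar{R}(h,a) = R(g(h),a)$, this says the two MDPs coincide after relabeling states through $g$.

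To establish statement 1, I would define $\tilde{\varphi}(h) := \phi(g(h))$ and substitute it into the Bellman equation for $\mathcal{C}(\rho)$. Using \eqref{eq:kernel-D} and the three identities above, the right-hand side at $h$ reduces term by term to the right-hand side of the belief-MDP Bellman equation evaluated at $\mathbf{b} = g(h)$, which equals $\phi(g(h)) = \tilde{\varphi}(h)$. Thus $\tilde{\varphi}$ is a bounded fixed point of the tree-MDP Bellman operator, so by uniqueness $\tilde{\varphi} = \varphi$; that is, $\varphi(h) = \phi(g(h))$, which is statement 1. Statement 2 is then immediate, since $g(h) = g(h')$ gives $\varphi(h) = \phi(g(h)) = \phi(g(h')) = \varphi(h')$. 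For statement 3, I would invoke that an optimal policy is greedy with respect to the optimal value function: statement 1 matches the value functions, and the identities above match the per-action objectives, so the set of maximizers of the belief-MDP Bellman equation at $\mathbf{b} = g(h)$ equals that of the tree-MDP Bellman equation at $h$; taking $\mu(h) := \pi(g(h))$ then yields $\pi(\mathbf{b}) = \mu(h)$ whenever $\mathbf{b} = g(h)$.

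The obstacle here is organizational rather than deep: I must ensure the fixed-point machinery is legitimate on the countably infinite, tree-structured state space $\mathcal{H}$ (and the infinite $\mathcal{W}$). This amounts to confirming that the Bellman operators are well-defined $\beta$-contractions on the bounded-function space---which follows from boundedness of the rewards and $\beta<1$---and to noting the minor subtlety that greedy actions need not be unique, so statement 3 is properly read as an equality of the sets of optimal actions, with $\mu$ constructed from $\pi$ through $g$ as above.
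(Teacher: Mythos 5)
Your proposal is correct and follows essentially the same route as the paper's own proof: both match the kernels and rewards through $g$ (using $P_a^\top g(h) = g((h,a))$), compare the two Bellman equations at ${\bf b} = g(h)$, and invoke uniqueness of the Bellman fixed point to get statement 1, with statements 2 and 3 following immediately. The only difference is that you make explicit the contraction-mapping justification on the countably infinite state space and the set-of-maximizers reading of statement 3, both of which the paper leaves implicit.
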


The two formulations, although similar, have different advantages. As shown in the previous section, the belief MDP formulation allows mathematical analysis of the optimal value function, thus establishing fundamental properties of the IOMDP. By contrast, we find it convenient to develop finite-state approximations and compute near-optimal policies for the problem using the tree MDP formulation.

\subsection{Truncated Approximation}
Our first finite-state approximation for  ${\cal C}(\rho)$ is inspired by a simple fact: starting from the root layer, the position state $h_t$ can arrive at layer $n$ only if the controller suffers $n$ consecutive transmission failures. The probability of such an event is $(1-\rho)^n$, which decreases exponentially as $n$ increases. When the probability of $h_t$ visiting ${\cal G}_n$ is small enough, the expected performance regret caused by taking a sub-optimal action in $h_t\in {\cal G}_n$ should be negligible. 

Let us consider the following  $L$-truncated approximation.
\begin{definition}[$L$-Truncated Approximation, TA($L$)]
	Given a positive integer $L$, the $L$-truncated approximation of ${\cal C}(\rho)$ is an MDP denoted by ${\cal C}{_L}(\rho ) = ({{\cal H}_L},{\cal A},\rho ,{D_L},\bar R,\beta )$. In particular, the state space of ${\cal C}_L(\rho)$ is ${{\cal H}_L} =  \cup _{n = 0}^L{{\cal G}_n}$. The action space ${\cal A}$, SIRP $\rho$, reward function $\bar R$, and discount factor $\beta$ are identical to those of ${\cal C}(\rho)$. The transition kernel $D_L$ is defined as follows:
	\begin{itemize}
		\item [(1)] For any $h \in {{\cal G}_n}$ with $n \le L - 1$, ${D_L}(h'|h,a) = D(h'|h,a)$ for all $h'\in {\cal H}_L$ and $a\in {\cal A}$;
		\item [(2)] For any $h \in {{\cal G}_L}$, let $\tau_a = (h,a)$ and ${\bf b}_a = g(\tau_a)$, where $a\in {\cal A}$. Then
		\begin{align} \label{eq:kernel-DL}
			D_L(h'|h,a) \buildrel \Delta \over = \Pr \left( {h'|h,a} \right) = \begin{cases}
				1 - \rho ,& {\text{ if  }}h' =h\in {\cal G}_L\\
				\rho {\bf b}_a(i),& {\text{ if  }}h' = i \in {\cal S}\\
				0,&{\text{    otherwise.}}
			\end{cases}
		\end{align}
	\end{itemize}
\end{definition}

\begin{figure}[t]
	\centering
	\includegraphics[width=3.3in]{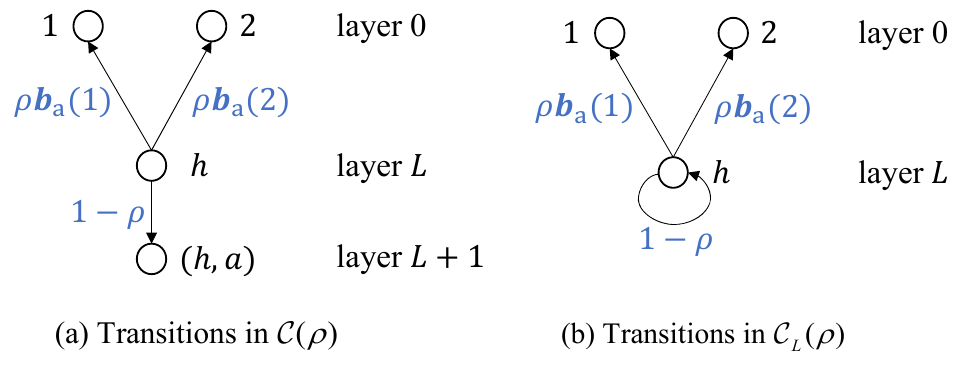}
	\caption{Transition probabilities of taking action $a\in {\cal A}$ in $h\in {\cal G}_L$ (${\cal S}=\{1,2\}$).}
	\label{fig:tal}	
\end{figure}

The state space of TA($L$) consists of position states at the first $L+1$ layers. Therefore, for any $h \in {{\cal G}_L}$, $(h,a) \in {{\cal G}_{L+1}}$ is not a state of ${\cal C}_L(\rho)$. We thus redefine the transition probabilities for $h \in {{\cal G}_L}$ in ${\cal C}_L(\rho)$. Fig.\ref{fig:tal} compares the transition probabilities, in ${\cal C}(\rho)$  and  ${\cal C}_L(\rho)$, of taking a particular action in $h \in {{\cal G}_L}$. TA($L$) can be viewed as an adaptation for practical operations. That is, the controller does not update the position state (and the associated belief state) after it experiences more than  $L$ consecutive transmission failures until it receives a new state observation. We will use  $\varphi_L(\cdot)$ to denote the optimal value function of  ${\cal C}_L(\rho)$. It turns out that $\varphi_L(h)$ is a good approximator of $\varphi(h)$ for part of $h\in {\cal H}_L$. The theorem below provides a bound for the approximation error of TA($L$).
\begin{theorem} \label{thm:TAL-bound}
	For any $\rho\in (0,1]$ and positive integer $L$,
	\begin{align*}
		&\mathop {\max }\limits_{h \in {{\cal G}_k}} \left| {{\varphi _L}(h) - \varphi (h)} \right| \le \\
		&{\delta _L}\left( {\frac{{{\beta ^{L + 1}}{{(1 - \rho )}^{L + 1}}}}{{1 - \beta }} + \sum\limits_{t = L - k}^{L - 1} {{\beta ^{t + 1}}{{(1 - \rho )}^{t + 1}}} } \right),{\rm{  }}0\le k \le L,
	\end{align*}
	where ${\delta _L} = \mathop {\max }\limits_{x \in {{\cal G}_L} \cup {{\cal G}_{L + 1}}} \varphi (x)$.
\end{theorem}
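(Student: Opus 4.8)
The plan is to compare the Bellman equations of $\mathcal{C}(\rho)$ and $\mathcal{C}_L(\rho)$ layer by layer, exploiting that their dynamics are \emph{identical} on $\mathcal{H}_{L-1}$ and differ only at $\mathcal{G}_L$, where the child transition of probability $1-\rho$ is replaced by a self-loop. Write $\alpha:=\beta(1-\rho)$ and $\epsilon_k:=\max_{h\in\mathcal{G}_k}|\varphi_L(h)-\varphi(h)|$; the goal becomes bounding the vector $(\epsilon_0,\dots,\epsilon_L)$. Let $\bar T_L$ be the Bellman operator of $\mathcal{C}_L(\rho)$, so $\varphi_L=\bar T_L\varphi_L$, and apply $\bar T_L$ to the restriction of $\varphi$ to $\mathcal{H}_L$. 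For $h\in\mathcal{G}_k$ with $k\le L-1$ the child $(h,a)\in\mathcal{G}_{k+1}\subseteq\mathcal{H}_L$ and $D_L$ agrees with $D$, so $\bar T_L\varphi(h)=\varphi(h)$ exactly, with \emph{no} residual. Using the nonexpansiveness $|\max_a A_a-\max_a B_a|\le\max_a|A_a-B_a|$, together with the fact that the reset weights $\rho[P_a^\top g(h)]_i$ sum to $\rho$ and land entirely on the roots $\mathcal{G}_0$, I obtain
\[
\epsilon_k\le \beta(1-\rho)\,\epsilon_{k+1}+\beta\rho\,\epsilon_0=\alpha\,\epsilon_{k+1}+\beta\rho\,\epsilon_0,\qquad k=0,\dots,L-1.
\]

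Next I treat the base layer $\mathcal{G}_L$. Here the only discrepancy between $\bar T_L\varphi$ and $\varphi$ is that the $(1-\rho)$-branch now points to $h$ itself instead of to the child $(h,a)\in\mathcal{G}_{L+1}$, so the residual satisfies $|\bar T_L\varphi(h)-\varphi(h)|\le\beta(1-\rho)\max_a|\varphi(h)-\varphi((h,a))|\le\alpha\delta_L$, where the last step uses $0\le\varphi\le\delta_L$ on $\mathcal{G}_L\cup\mathcal{G}_{L+1}$ (nonnegativity of the reward). Propagating this residual through the self-loop and the reset to $\mathcal{G}_0$ exactly as above yields the self-referential bound $\epsilon_L\le\alpha\,\epsilon_L+\beta\rho\,\epsilon_0+\alpha\delta_L$.

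The collected inequalities read $\vec\epsilon\le\Phi(\vec\epsilon)$ componentwise for the affine map $\Phi$ with $\Phi(\vec x)_k=\alpha x_{k+1}+\beta\rho x_0$ for $k<L$ and $\Phi(\vec x)_L=\alpha x_L+\beta\rho x_0+\alpha\delta_L$. Its linear part is nonnegative with every row sum equal to $\alpha+\beta\rho=\beta<1$, so $\Phi$ is a monotone sup-norm contraction with a unique fixed point $\vec x^\ast$; monotone iteration from $\vec\epsilon$ (using $\vec\epsilon\le\Phi(\vec\epsilon)\le\Phi^2(\vec\epsilon)\le\cdots\to\vec x^\ast$) forces $\vec\epsilon\le\vec x^\ast$. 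It then remains to check that the claimed right-hand sides $b_k=\delta_L\big(\tfrac{\alpha^{L+1}}{1-\beta}+\sum_{t=L-k}^{L-1}\alpha^{t+1}\big)$ are precisely this fixed point: the two identities $b_k=\alpha b_{k+1}+\beta\rho b_0$ and $b_L(1-\alpha)=\beta\rho b_0+\alpha\delta_L$ both collapse to $\beta\rho=\beta\rho$ after dividing through by $\alpha^{L+1}$, so $\vec b=\vec x^\ast$ and the theorem follows.

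The main obstacle is the circular coupling of the error across layers: because every position state resets to a root in $\mathcal{G}_0$ in one step, $\epsilon_0$ enters \emph{every} equation, and the self-loop at $\mathcal{G}_L$ makes $\epsilon_L$ appear on both sides of its own bound. Consequently neither a top-down nor a bottom-up induction closes on its own. The resolution is to refrain from solving the layers sequentially and instead package all the inequalities as a single monotone contraction, dominating $\vec\epsilon$ by its fixed point—which, after the elementary verification above, is exactly the stated bound.
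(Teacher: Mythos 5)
Your proof is correct, and it takes a genuinely different route from the paper's. The paper introduces an auxiliary infinite-state MDP $\mathcal{C}'(\rho)$ that agrees with $\mathcal{C}(\rho)$ everywhere except on $\mathcal{G}_L$, and for each fixed policy $\pi$ writes the value gap as $\beta(I-\beta M^\pi)^{-1}\xi^\pi$, where the perturbation $\xi^\pi$ is supported on $\mathcal{G}_L$ with $\lVert\xi^\pi\rVert_\infty\le\beta(1-\rho)\delta_L$; the layer-dependent constant then comes from an exact computation of the discounted occupancy $\sum_{t}\beta^t\Pr(x_t\in\mathcal{G}_L\mid x_0=h)$ for $h\in\mathcal{G}_k$ (a separate recursion, Appendix E), followed by a two-sided comparison using the optimal policies of both MDPs to transfer the policy-wise bound to the optimal value functions. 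You dispense with policies and the auxiliary MDP entirely: you compare the optimal Bellman operators directly, note they coincide on $\mathcal{H}_{L-1}$ and differ on $\mathcal{G}_L$ by a residual of size at most $\beta(1-\rho)\delta_L$, and arrive at a coupled system of linear inequalities in the per-layer errors $\epsilon_k$ — coupled through $\epsilon_0$ by the resets and self-referential at $\epsilon_L$ by the self-loop — which you close by dominating the error vector with the fixed point of a monotone affine $\beta$-contraction and verifying that this fixed point equals the stated bound (your algebra checks out: both fixed-point identities reduce to $\alpha+\beta\rho=\beta$). What the paper's route buys is the probabilistic reading of the bound — per-visit regret $\beta(1-\rho)\delta_L$ times expected discounted visits to the truncated layer — which the paper uses in its discussion after the theorem, as well as a bound valid policy-by-policy. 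What your route buys is economy and self-containment: no infinite-state auxiliary construction, no occupancy computation, no two-sided optimal-policy argument, and the monotone growth of the bound in $k$ is transparent from the recursion. Two small points you should state explicitly: the step $|\varphi(h)-\varphi((h,a))|\le\delta_L$ requires $\varphi\ge 0$, i.e.\ nonnegative rewards (the paper makes the same implicit assumption when asserting $\varphi^\pi\ge 0$), and the domination of the error vector by the fixed point requires the $\epsilon_k$ to be finite, which holds because rewards are bounded and each layer is a finite set.
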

\begin{proof}
	See Appendix B.
\end{proof}

{ The upper bound presented in Theorem \ref{thm:TAL-bound} is insightful. We refer to $k$ as the layer index of $h$ if $h\in {\cal G}_k$. Notably, the error bound in Theorem \ref{thm:TAL-bound} is an increasing function of $k\in \{0,1,\cdots,L\}$, indicating that the TA($L$) offers a more accurate value approximation for position states with smaller layer indices. The monotonicity of the error bound w.r.t. the layer index can be understood through the lens of perturbation analysis \cite{caostochastic}. When the transition kernel is changed from $D$ to $D_L$, a perturbation could occur only if the position state is in the truncated layer (i.e., layer $L$). However, starting from the $k$-th layer, the position state can arrive at the truncated layer only if the controller experiences at least $L-k+1$  consecutive transmission failures, with the associated probability being $(1-\rho)^{L-k+1}$; its impact is further discounted by a factor of $(1-\beta)^{L-k+1}$. Consequently, the lower the initial layer index, the less likely a perturbation occurs, leading to a smaller approximation error.
	When $k=0$, the error bound reduces to ${\delta _L}{\beta ^{L + 1}}{(1 - \rho )^{L + 1}}/(1 - \beta )$, which can be arbitrarily small if $L$  is large enough. While for $h\in {\cal G}_L$, the approximation error is dominated by the term  $\beta (1 - \rho ){\delta _L}$, which may be non-negligible.
	
	%\begin{remark}
	%	Unlike the finite-memory approximations in general POMDPs \cite{kara2023convergence}, the special structure of IOMDPs enable 
	%\end{remark}
}

{ The optimal policy for the tree MDP  ${\cal C}(\rho)$ is determined by the Bellman equation
	\begin{align} \label{eq:bellman-C}
		\varphi (h) = \max_{a\in \mathcal{A}} Q(h,a) ,
	\end{align}
	where $Q: {\cal H} \times {\cal A} \to \mathbb{R}$ is the state-action value function:
	\begin{align*}
		Q(h,a) = \bar R(h,a) + \beta \sum\limits_{h' \in {\cal H}} {D\left( {h'|h,a} \right)\varphi (h')}.
	\end{align*}
	An action $a\in \mathcal{A}$ is said to be $\epsilon$-optimal for ${\cal C}(\rho)$ in position state $h$ if $\left| {Q(h,a) -\varphi(h)} \right| \le \epsilon$. 
	The following fact can be easily verified:
	
	\noindent {\bf Fact 1}: {\it For any $h\in {\cal H}$ in the tree MDP ${\cal C}(\rho)$, define
		\begin{align*}
			{\epsilon_h} \buildrel \Delta \over = \min \left\{ {\varphi(h) - Q(h,a)|a \in {\cal A}{\text{ is non-optimal}}} \right\}.
		\end{align*}
		If an action $a\in \mathcal{A}$ is $\epsilon$-optimal in position state $h$ with $\epsilon \le \epsilon_h$, then action $a$ is guaranteed to be the optimal for ${\cal C}(\rho)$ in position state $h$.
	}
	
	According to \eqref{eq:kernel-D}, for any $h\in \mathcal{G}_k$, $D(h'|h,a)>0$ only if $h'\in {{\cal G}_0} \cup {{\cal G}_{k + 1}}$. Therefore, if $\varphi_L(h')$ is sufficiently close to $\varphi(h')$ for all $h' \in {{\cal G}_0} \cup {{\cal G}_{k + 1}}$, we can obtain the optimal action for every $h\in {\cal G}_k$ by replacing $\varphi(\cdot)$ with $\varphi_L(\cdot)$ in the Bellman equation. 
}

Denote by $\mu_L$ an optimal policy for ${\cal C}_L(\rho)$ and $\mu_L(h)$ the associated action for $h\in {\cal H}_L$. According to Theorem \ref{thm:TAL-bound}, given any $\epsilon>0$ and positive integer $n$, there exists a finite integer $L_n \ge n$ such that $\left| {{\varphi _L}(h) - \varphi (h)} \right| < \epsilon$ for all $L\ge L_n$ and $h \in  \cup _{k = 0}^n{{\cal G}_k}$. Consequently, for any $L\ge L_n$ and $h \in  \cup _{k = 0}^n{{\cal G}_k}$, $\mu_L(h)$ is $\epsilon$-optimal for ${\cal C}(\rho)$. With $\mu_L$, we can derive a policy for ${\cal C}(\rho)$ to control the underlying MDP in the presence of state information losses. In particular,
\begin{definition}[TA($L$) Policy]
	For any positive integer $L$, define the TA($L$) policy for ${\cal C}(\rho)$ as follows:
	\begin{itemize}
		\item [(1)] For $h\in {\cal H}_L$, the TA($L$) policy takes action $\mu_L(h)$;
		\item [(2)] For $h\notin {\cal H}_L$, denote by $h'\in {\cal G}_L$ the ancestor node of $h$ in ${\cal G}_L$. Then the TA($L$) policy takes action $\mu_L(h')$ in position state $h$.
	\end{itemize}
\end{definition}
Intuitively, the larger the value of $L$, the closer the TA($L$) policy is to the optimal policy.
More analysis about the TA($L$) policy will be provided in the next section.

\section{High-order Truncated Approximation} \label{sec:HOTA}
Although TA($L_n$) can be used to determined an $\epsilon$-optimal action for each $h \in  \cup _{k = 0}^n{{\cal G}_k}$, we typically have $L_n>n$ for small $\epsilon$. Hence a relatively large $L_n$ is needed to obtain a satisfactory policy. This makes the method inefficient and usually prohibitively complex since the state space of TA($L$) increases exponentially with $L$:
\begin{align} \label{eq:HL}
	\left| {{{\cal H}_L}} \right| = \left| {{{\cal S}}} \right|\frac{{{{\left| {\cal A} \right|}^{L + 1}} - 1}}{{\left| {\cal A} \right| - 1}}.
\end{align}
As a result, TA($L$) is limited to IOMDPs with large $\rho$ and small action spaces. We next put forth a more efficient and scalable approach to finding near-optimal policies for ${\cal C}(\rho)$. The following concepts are important for developing the approach.
\begin{definition}
	Given a policy $\mu$, a position state of ${\cal C}(\rho)$ is said to be reachable under policy $\mu$ if it can be visited within a finite time with a positive probability. A position state is redundant if it is not reachable.
\end{definition}

As discussed around \eqref{eq:Gn}, every position state $h$ has $|{\cal A}|$ children, among which $(h,a)$ can be visited only if the controller takes action $a$ in $h_t=h$ and fails to receive $s_{t+1}$. Therefore, using a deterministic policy, each position state has only one reachable child. That is, each layer in the tree contains exactly $|{\cal S}|$  reachable position states. If we fix the optimal action for  $h$ to be, say $\mu(h)$, then $(h,a)$ will never be visited if $a\neq \mu(h)$ (except for the initial position states). That is, under the optimal policy, $(h,\mu(h))$ is reachable, and the remaining children of  $h$ are redundant. As far as computing the optimal policy is concerned, we can omit the redundant position states and remove them from the state space.

While the number of reachable position states at each layer is fixed (i.e, $|{\cal S}|$), the total number of position states increases exponentially over layers---most of them are redundant. This is the source of the inefficiency and poor scalability of TA($L$). The computation complexity can be significantly reduced if we can remove the redundant position states when computing a policy.

Let us consider the following idea to compute $\epsilon$-optimal actions for $h \in  \cup _{k = 0}^n{{\cal G}_k}$. Instead of solving for TA($L_n$) with a large $L_n$, we first select an $L_0 < L_n$ so that $L_0$ is large enough for computing $\epsilon$-optimal actions for all $h\in {\cal G}_0$ by solving TA($L_0$). Upon determining an $\epsilon$-optimal action for every $h\in {\cal G}_0$, we can identify the reachable position states in ${\cal G}_1$. The next step is to compute  $\epsilon$-optimal actions for the reachable position states in ${\cal G}_1$---we do not compute actions for the redundant position states since they will never be reached. For this purpose, we construct a modified TA($L_0+1)$, which will be referred to as TA($1,L_0$), by removing the redundant position states in ${\cal G}_1$ and their descendants at other layers from the state space of TA($L_0+1$). Solving for TA($1,L_0$) is considerably easier than  TA($L_0+1$) due to the reduction of state space, but it still provides  $\epsilon$-optimal actions for the reachable position states in ${\cal G}_1$.
Doing so repeatedly, we can compute  $\epsilon$-optimal actions for reachable position states layer by layer. We call this method high-order truncated approximation, where the $n$-th order truncated approximation refers to repeating the above procedure $n$  times and will be denoted by TA($n,L$). We will justify this method after providing a formal definition below.

We formally define TA($n,L$) in a recursive way for $n\in \{0,1,2,\cdots \}$ and a fixed $L\in \{1,2,3,\cdots \}$. In particular, TA($n,L$) is an MDP denoted by ${\cal C}^n_L(\rho)$ with ${\cal C}^0_L(\rho) \triangleq {\cal C}_L(\rho)$. Let $\mu^n_L$ stand for the optimal policy for ${\cal C}^n_L(\rho)$. As discussed before, with a proper $L$, $\mu^0_L(h) = \mu_L(h)$ is an $\epsilon$-optimal action for ${\cal C}(\rho)$ in position state $h\in {\cal G}_0$. Our aim is to define a series of $\{{\cal C}^n_L(\rho):n=1,2,3,\cdots \}$ such that, for every reachable $h$ in ${\cal G}_n$, $\mu^n_L(h)$ is an $\epsilon$-optimal action for ${\cal C}(\rho)$. Meanwhile, the redundant position states in ${\cal G}_n$ and their descendants at other layers will be excluded from the state space of ${\cal C}^n_L(\rho)$ to reduce computation complexity.

\begin{figure}[t]
	\centering
	\includegraphics[width=2.8in]{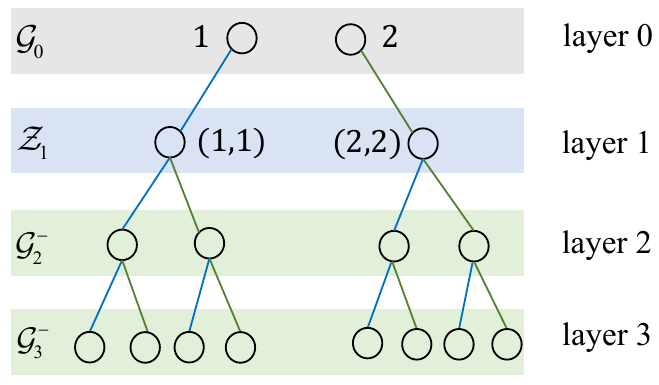}
	\caption{State space of TA(1,3), ${\cal S}={\cal A}=\{1,2\},\mu^0_L(1)=1, \mu^0_L(2)=1$. The redundant position states at layer 1 and their descendants at layers 2 and 3 are removed.}
	\label{fig:hota}	
\end{figure}

Mathematically, ${\cal C}_L^n(\rho ) = ({\cal H}_L^n,\{ {\cal A}_h^n\} ,\rho ,D_L^n,\bar R,\beta )$ is an MDP with state space
\begin{align*}
	{\cal H}_L^n = \left( {\mathop  \cup \limits_{i = 0}^n {{\cal Z}_i}} \right) \cup \left( {\mathop  \cup \limits_{i = n + 1}^{L + n} {\cal G}_i^ - } \right),
\end{align*}
where ${{\cal Z}_i} \subseteq {{\cal G}_i}$ is the set of reachable position states in  ${\cal G}_i$ under the policy $\mu^{i-1}_L$:
\begin{align*}
	{{\cal Z}_i} \buildrel \Delta \over =  \begin{cases}
		{{\cal G}_0},& i = 0,\\
		\left\{ {\left( {h,\mu _L^{i - 1}(h)} \right):h \in {{\cal Z}_{i - 1}}} \right\},& 1 \le i \le n,
	\end{cases}
\end{align*}
and ${\cal G}_{n + i}^ -  \subset {{\cal G}_{n + i}}$ is the set of descendants of ${\cal Z}_n$ at layer $(n+i)$:
\begin{align*}
	{\cal G}_{n + i}^ -  \buildrel \Delta \over = \begin{cases}
		\left\{ {\left( {h,a} \right):a \in {\cal A},h \in {{\cal Z}_n}} \right\},& i = 1,\\
		\left\{ {(h,a):a \in {\cal A},h \in {\cal G}_{n + i - 1}^ - } \right\},&i > 1.
	\end{cases}
\end{align*}
Fig. \ref{fig:hota} gives an example of the state space of TA($n,L$) for $n=1, L=3$. The action space for any $h\in {\cal H}^n_L$ is given by
\begin{align}
	{\cal A}_h^n = \begin{cases}
		\left\{ {\mu _L^{n - 1}(h)} \right\},& {\text{if }}h \in \mathop  \cup \limits_{i = 0}^{n - 1} {{\cal Z}_i}\\
		{\cal A},&{\text{otherwise.}}
	\end{cases} 
\end{align}
The first line of the above definition means to fix the action for the already determined reachable states at layers 0 to $n-1$. With this restriction, we have $\mu _L^n(h) = \mu _L^{n - 1}(h)$ for all $h \in  \cup _{i = 0}^{n - 1}{{\cal Z}_i}$. The SIRP $\rho$, reward function $\bar R$, and discount factor $\beta$ are identical to those of ${\cal C}(\rho)$. The transition kernel $D^n_L$ is defined as follows:
\begin{itemize}
	\item [(1)] For $h \in \left( { \cup _{i = 0}^n{{\cal Z}_i}} \right) \cup \left( { \cup _{i = n + 1}^{L + n - 1}{\cal G}_i^ - } \right)$, $D_L^n(h'|h,a) = D(h'|h,a)$ for any $h'\in {\cal H}^n_L$ and $a\in {\cal A}^n_h$;
	\item [(2)] For $h \in {\cal G}_{L + n}^ - $, let $\tau_a = (h,a)$ and ${\bf b}_a = g(\tau_a)$, where $a\in {\cal A}$. Then
\end{itemize}
\begin{align} \label{eq:kernel-DLn}
	D^n_L(h'|h,a) \buildrel \Delta \over = \Pr \left( {h'|h,a} \right) = \begin{cases}
		1 - \rho ,& {\text{ if  }}h' =h\in {\cal G}_{L + n}^ -\\
		\rho {\bf b}_a(i),& {\text{ if  }}h' = i \in {\cal S}\\
		0,&{\text{    otherwise.}}
	\end{cases}
\end{align}

This completes the definition of ${\cal C}^n_L(\rho)$. Note that $|{{\cal Z}_i}| = |{{\cal G}_0}|$ for all $1\le i\le n$ and that ${\cal H}_L^n \subseteq {{\cal H}_{L + n}}$ can be determined with $\mu^{n-1}_L$. Starting from $n=0$ and ${\cal Z}_0 = {\cal G}_0$, we compute the optimal policy $\mu^n_L$ for ${\cal C}^n_L(\rho)$ and then use $\mu^n_L$ and ${\cal Z}_n$ to identify ${\cal Z}_{n+1}$ (the set of reachable position states in ${\cal G}_{n+1}$); then the descendants of ${\cal Z}_{n+1}$ at the following $L-1$ layers (i.e., ${\cal G}_{n + i}^ - $ for $2\le i \le L+1$) can be identified. On this basis, we construct ${\cal C}^{n+1}_L(\rho)$ to compute the actions for position states in ${\cal Z}_{n+1}$. Doing so iteratively with a proper $L$  yields the optimal actions of the reachable position states layer by layer (see Theorem \ref{thm:policies}). We can stop when $(1-\rho)^{n+1}$ is small enough and control ${\cal C}(\rho)$ using the TA($n,L$) policy derived from $\mu^n_L$:
\begin{definition}[TA($n,L$) Policy]
	For any positive integers $L$ and $n$, define the TA($n,L$) policy for ${\cal C}(\rho)$ as follows:
	\begin{itemize}
		\item [(1)] For $h\in {\cal H}^n_L$, the TA($n,L$) policy takes action $\mu^n_L(h)$;
		\item [(2)] For $h\notin {\cal H}^n_L$, denote by $h'\in {\cal G}_{L+n}$ the ancestor node of $h$ in ${\cal G}_{L+n}$. Then the TA($n,L$) policy takes action $\mu^n_L(h')$ in position state $h$.
	\end{itemize}
\end{definition}

We next establish the relationship between ${\cal C}^n_L(\rho)$ and ${\cal C}_{L+n}(\rho)$, and then analyze the TA($L$) and TA($n,L$) policies. Denote by $\varphi _L^n:{\cal H}_L^n \to \mathbb{R} $ the optimal value function of ${\cal C}^n_L(\rho)$. The following lemma is useful.
\begin{lemma}  \label{lem:CLn}
	For any fixed $L$, if ${\cal C}^n_L(\rho)$ and ${\cal C}_{L+n}(\rho)$ have the same optimal action for every $h \in  \cup _{i = 0}^{n - 1}{{\cal Z}_i}$, then ${\varphi _{L + n}}(h) = \varphi _L^n(h)$ for any $h\in {\cal H}^n_L$.
\end{lemma}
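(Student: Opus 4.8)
The plan is to show that the restriction of $\varphi_{L+n}$ to the smaller state space ${\cal H}^n_L\subseteq{\cal H}_{L+n}$ is itself a solution of the Bellman optimality equation of ${\cal C}^n_L(\rho)$. Since that equation has a unique solution --- namely the optimal value function $\varphi^n_L$, by the usual sup-norm contraction argument with modulus $\beta<1$ --- this immediately forces $\varphi_{L+n}(h)=\varphi^n_L(h)$ for every $h\in{\cal H}^n_L$. Concretely, I would define $\tilde\varphi(h):=\varphi_{L+n}(h)$ for $h\in{\cal H}^n_L$ and verify, for each such $h$, the identity
\[
\tilde\varphi(h)=\max_{a\in{\cal A}^n_h}\Big\{\bar R(h,a)+\beta\sum_{h'\in{\cal H}^n_L}D^n_L(h'|h,a)\,\tilde\varphi(h')\Big\}.
\]

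First I would record two structural facts that make the Bellman backups of ${\cal C}^n_L(\rho)$ and ${\cal C}_{L+n}(\rho)$ directly comparable. (a) Every $h\in{\cal H}^n_L$ has all of its $D^n_L$-successors inside ${\cal H}^n_L$: the root successors lie in ${\cal G}_0={\cal Z}_0$, while the child successor $(h,a)$ lies in ${\cal Z}_{i+1}$ when $h\in{\cal Z}_i$ and $a=\mu^{n-1}_L(h)$ is the single permitted action (using that $\mu^{n-1}_L$ agrees with $\mu^i_L$ on ${\cal Z}_i$, as noted in the construction), and in ${\cal G}_{i+1}^-$ when the full action set is allowed; in either case it stays in the state space. (b) On these surviving transitions $D^n_L(\cdot|h,a)$ coincides with $D_{L+n}(\cdot|h,a)$: for the interior layers $0,\dots,L+n-1$ both equal the original kernel $D$, and for the boundary layer ${\cal G}_{L+n}^-\subseteq{\cal G}_{L+n}$ both use the identical truncated kernel (self-loop with probability $1-\rho$, return to the roots with probability $\rho\,{\bf b}_a(i)$). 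The key point here is that the truncation layers of the two MDPs coincide --- both sit at layer $L+n$.

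With these facts in hand the verification of the displayed identity splits according to the action set. For $h\in{\cal Z}_n$ and for $h\in{\cal G}_i^-$ with $n+1\le i\le L+n$, the action set ${\cal A}^n_h$ is the full ${\cal A}$, exactly as in ${\cal C}_{L+n}(\rho)$; by (a) and (b) the backup of $\tilde\varphi$ reproduces verbatim the Bellman equation of ${\cal C}_{L+n}(\rho)$ at $h$, hence equals $\varphi_{L+n}(h)=\tilde\varphi(h)$. The only nontrivial regime is $h\in\cup_{i=0}^{n-1}{\cal Z}_i$, where ${\cal C}^n_L(\rho)$ offers the single action $\mu^{n-1}_L(h)$, so the right-hand side reduces to the $Q$-value at that one action, computed against $\tilde\varphi=\varphi_{L+n}$. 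This is exactly where the hypothesis enters: since $\mu^{n-1}_L(h)$ is by assumption an optimal action of ${\cal C}_{L+n}(\rho)$ at $h$, this $Q$-value attains the maximum in the Bellman equation of ${\cal C}_{L+n}(\rho)$ and therefore equals $\varphi_{L+n}(h)=\tilde\varphi(h)$. Thus $\tilde\varphi$ satisfies the Bellman equation of ${\cal C}^n_L(\rho)$ at every state, and uniqueness of the optimal value function yields the claim.

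I expect the main obstacle to be the bookkeeping behind fact (a): confirming that restricting the actions at layers $0,\dots,n-1$ to $\mu^{n-1}_L$ deletes from ${\cal C}_{L+n}(\rho)$ precisely those states that can no longer be reached, so that no deleted state ever appears in a surviving backup, and that the two truncation boundaries align. The hypothesis is indispensable only in the restricted-action regime: without it the forced action $\mu^{n-1}_L(h)$ could be strictly sub-optimal for ${\cal C}_{L+n}(\rho)$ and the fixed-point identity at those states would break down. Everything else is a direct transcription of the two Bellman equations, so no quantitative estimates are required.
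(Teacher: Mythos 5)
Your proof is correct, but it takes a genuinely different route from the paper's. You verify that the restriction of $\varphi_{L+n}$ to ${\cal H}^n_L$ is a fixed point of the Bellman optimality operator of ${\cal C}^n_L(\rho)$ --- using that all surviving successors of states in ${\cal H}^n_L$ remain inside ${\cal H}^n_L$, that $D^n_L$ and $D_{L+n}$ agree on those transitions (including at the common truncation layer ${\cal G}^-_{L+n}\subseteq{\cal G}_{L+n}$), and that at the restricted-action states the hypothesis makes the forced action's $Q$-value attain the maximum of the Bellman equation of ${\cal C}_{L+n}(\rho)$ --- and you then conclude by uniqueness of the fixed point. The paper instead argues at the level of policies and trajectories: in one direction it notes that, under the optimal policy $\mu_{L+n}$ started from any $h\in{\cal H}^n_L$, the removed states ${\cal H}_{L+n}\setminus{\cal H}^n_L$ are redundant (visited with probability zero), so the same expected total discounted reward is realized by a policy $\kappa$ of ${\cal C}^n_L(\rho)$ that copies $\mu_{L+n}$ (the hypothesis guarantees $\mu_{L+n}(h)\in{\cal A}^n_h$), giving $\varphi_{L+n}(h)\le\varphi^n_L(h)$; in the other direction, every policy of ${\cal C}^n_L(\rho)$ corresponds to a policy of ${\cal C}_{L+n}(\rho)$ in the subclass $\Pi^c_n$ that follows $\mu_{L+n}$ on $\cup_{i=0}^{n-1}{\cal Z}_i$, so maximizing over this smaller class gives $\varphi^n_L(h)\le\varphi_{L+n}(h)$. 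Both arguments rest on the same structural bookkeeping (closure of ${\cal H}^n_L$ under the surviving transitions and agreement of the two kernels), but your verification argument is local and avoids infinite-horizon trajectory sums, whereas the paper's two-sided policy comparison makes the ``redundant states are never visited'' intuition explicit and yields the two inequalities separately.
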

\begin{proof}
	See Appendix C.
\end{proof}

According to Fact 1, define
\begin{align*}
	\epsilon(n) \triangleq \min \{\epsilon_h : h\in {\cal H}_n \}.
\end{align*}
We show in the following theorem that, with a proper $L$, solving for ${\cal C}^n_L(\rho)$ yields $\epsilon(n)$-optimal (thus optimal) actions for all reachable position states at layers 0 to $n$. Recall that $\mu^n_L$ and $\mu_{L+n}$ denote the optimal policies for ${\cal C}^n_L(\rho)$ and ${\cal C}_{L+n}(\rho)$, respectively.

\begin{theorem} \label{thm:policies}
	For any positive integer $n$, given an $L$ satisfying
	\begin{align*}
		\delta (n,L)\left( {\frac{{{\beta ^{L + 1}}{{(1 - \rho )}^{L + 1}}}}{{1 - \beta }} + {\beta ^L}{{(1 - \rho )}^L}} \right) \le \epsilon(n),
	\end{align*}
	where $\delta (n,L) = \mathop {\max }\limits_{x \in {{\cal H}_{L + n}}} \varphi (x)$. Then,
	\begin{itemize}
		\item [(1)] $\mu_{L+n}(h)$ is an optimal action for any $h \in  \cup _{k = 0}^n{{\cal G}_k}$ in ${\cal C}(\rho)$.
		\item [(2)] $\mu^n_L(h)$ is an optimal action for any $h \in  \cup _{k = 0}^n{{\cal Z}_k}$ in ${\cal C}(\rho)$.
	\end{itemize}
\end{theorem}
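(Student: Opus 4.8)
The plan is to establish (1) directly from Theorem~\ref{thm:TAL-bound} and Fact~1, and then to deduce (2) by induction on the order $n$, transporting the conclusion of (1) from the ordinary truncation $\mathcal{C}_{L+n}(\rho)$ to the high-order truncation $\mathcal{C}^n_L(\rho)$ via Lemma~\ref{lem:CLn}. For (1), I fix the truncation level $M=L+n$ and take any $h\in\mathcal{G}_k$ with $0\le k\le n$. Since $L\ge 1$, we have $k\le n\le M-1$, so $h\in\mathcal{H}_{M-1}$ and Fact~1 applies: it suffices to show $\mu_{M}(h)$ is $\epsilon_h/2$-optimal, i.e. that $|\varphi_{M}(h')-\varphi(h')|$ is at most $\epsilon(n)\le\epsilon_h/2$ for every $h'\in\mathcal{G}_0\cup\mathcal{G}_{k+1}$.

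The core estimate comes from applying Theorem~\ref{thm:TAL-bound} at layers $j\in\{0,k+1\}$ with truncation $M$. Writing $\alpha:=\beta(1-\rho)$ (so $\alpha\le\beta<1$), the theorem bounds the layer-$j$ error by $\delta_{M}\big(\frac{\alpha^{M+1}}{1-\beta}+\sum_{t=M-j}^{M-1}\alpha^{t+1}\big)$. The key step is a telescoping estimate: separating the leading summand $\alpha^{M-j+1}$ and bounding $\sum_{s=M-j+2}^{M}\alpha^{s}=\frac{\alpha^{M-j+2}-\alpha^{M+1}}{1-\alpha}\le\frac{\alpha^{M-j+2}-\alpha^{M+1}}{1-\beta}$ (using $\alpha\le\beta$) collapses the bracket to $\alpha^{M-j+1}+\frac{\alpha^{M-j+2}}{1-\beta}$. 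As $j\le n+1$, we have $M-j+1\ge L$, so this is at most $\alpha^{L}+\frac{\alpha^{L+1}}{1-\beta}$, precisely the bracket of the hypothesis. To replace $\delta_{M}$ by $\delta(n,L)$ I invoke Lemma~1: the convexity of $\phi(\cdot,\rho)$ forces its maximum onto a vertex $\mathbf{e}_i$ of $\Delta_{\mathcal{S}}$, i.e. onto a root in $\mathcal{G}_0\subseteq\mathcal{H}_{L+n}$, so $\delta_{M}=\max_{\mathcal{G}_{L+n}\cup\mathcal{G}_{L+n+1}}\varphi\le\max_{\mathcal{G}_0}\varphi=\delta(n,L)$. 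The error is therefore at most $\delta(n,L)\big(\frac{\beta^{L+1}(1-\rho)^{L+1}}{1-\beta}+\beta^{L}(1-\rho)^{L}\big)\le\epsilon(n)$, and Fact~1 then certifies $\mu_{L+n}(h)$ as optimal for $\mathcal{C}(\rho)$, proving (1).

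For (2) I first observe that the hypothesis for $n$ implies the same inequality for every $m\le n$: $\delta(m,L)\le\delta(n,L)$ because $\mathcal{H}_{L+m}\subseteq\mathcal{H}_{L+n}$, while $\epsilon(m)\ge\epsilon(n)$ because the minimum defining $\epsilon$ is taken over the smaller set $\mathcal{H}_m$; hence part (1) is available at every level $L+m$, $m=0,\dots,n$. I then induct on $m$ with the claim $P(m)$: $\mu^m_L=\mu_{L+m}$ on $\cup_{i=0}^{m}\mathcal{Z}_i$ and both are optimal for $\mathcal{C}(\rho)$. The base case $m=0$ holds since $\mathcal{C}^0_L(\rho)=\mathcal{C}_L(\rho)$. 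For the step, the action space of $\mathcal{C}^m_L(\rho)$ pins $\mu^m_L=\mu^{m-1}_L$ on $\cup_{i=0}^{m-1}\mathcal{Z}_i$, which by $P(m-1)$ equals the optimal $\mu_{L+m-1}$; since part (1) also makes $\mu_{L+m}$ optimal there, uniqueness of the optimal action yields agreement on $\cup_{i=0}^{m-1}\mathcal{Z}_i$, so Lemma~\ref{lem:CLn} gives $\varphi^m_L=\varphi_{L+m}$ on $\mathcal{H}^m_L$. Because the Bellman backup at any $h\in\mathcal{Z}_m$ consults values only on $\mathcal{G}_0$ and on the children $\{(h,a):a\in\mathcal{A}\}\subseteq\mathcal{G}_{m+1}^{-}\subseteq\mathcal{H}^m_L$, the backups of $\mathcal{C}^m_L(\rho)$ and $\mathcal{C}_{L+m}(\rho)$ coincide, giving $\mu^m_L=\mu_{L+m}$ on $\mathcal{Z}_m$ as well; part (1) then certifies optimality, establishing $P(m)$ and, at $m=n$, the claim.

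The main obstacle I anticipate is the bookkeeping in (2): sustaining the agreement hypothesis of Lemma~\ref{lem:CLn} across the induction while the state space $\mathcal{H}^m_L$, the fixed-action set $\cup_{i=0}^{m-1}\mathcal{Z}_i$, and the reachable layers $\mathcal{Z}_i$ all move with $m$. The step also relies on the optimal action of $\mathcal{C}(\rho)$ being unique at each node---guaranteed by $\epsilon_h>0$, or otherwise by fixing one tie-breaking rule common to all the truncations---without which two separately optimal actions need not coincide. By contrast, (1) is routine once the telescoping identity that reduces the layer-$j$ bound to two terms is spotted.
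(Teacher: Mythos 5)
Your proposal is correct and takes essentially the same route as the paper: Theorem~\ref{thm:TAL-bound}'s bound is collapsed to the bracket in the hypothesis (your telescoping/geometric-sum estimate plays the role of the paper's monotonicity of its quantity $O_L(k)$), Fact~1 upgrades $\epsilon(n)$-closeness to exact optimality for part (1), and part (2) is the paper's layer-by-layer transfer of optimality through Lemma~\ref{lem:CLn}, which the paper phrases as ``repeating the argument'' and you phrase as an explicit induction on $m$. Two details you actually handle more carefully than the paper: the inequality $\delta_{L+n}\le\delta(n,L)$ genuinely needs your convexity argument via Lemma~1 (since ${\cal G}_{L+n+1}\not\subseteq{\cal H}_{L+n}$, set inclusion alone does not suffice), and the uniqueness/tie-breaking caveat you flag is silently assumed in the paper's claim that the optimal actions of successive truncations coincide.
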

\begin{proof}
	See Appendix C.
\end{proof}

Theorem \ref{thm:policies} shows that both TA($L+n$) and TA($n,L$) can be used to obtain optimal actions for reachable position states at layers 0 to $n$. However, it is easy to see that TA($n,L$)  is much more efficient than TA($L+n$). The cardinality of the state space of ${\cal C}^n_L(\rho)$  is increasing linearly with $n$, that is,
\begin{align*}
	\left| {{\cal H}_L^n} \right| = \left| {{{\cal G}_0}} \right|\left( {\frac{{{{\left| {\cal A} \right|}^{L + 1}} - 1}}{{\left| {\cal A} \right| - 1}} + n} \right),
\end{align*}
where $ {{{\cal G}_0}} = {\cal S} $. By contrast, \eqref{eq:HL} shows that $|{{\cal H}_{L + n}}|$ increases exponentially with $n$ for any fixed $L$. Although solving ${\cal C}^n_L(\rho)$ needs to solve ${\cal C}^k_L(\rho)$ for all $k\le n$, the overall computation complexity of TA($n,L$) is still significantly lower than that of TA($L$).
In addition, if we use a value iteration algorithm to solve ${\cal C}^n_L(\rho)$, then the optimal value function of ${\cal C}^{n-1}_L(\rho)$ is a good starting point for the value iteration. Since ${\varphi}^{n-1}_L(h)$ is close to ${\varphi}^n_L(h)$ for $h \in  \cup _{i = 0}^{n - 1}{{\cal Z}_i}$. It is well-known that the value iteration converges quickly if the starting point is near the optimum. 

Since TA$(L+n)$ and TA($n,L$) policies may take suboptimal actions for $h\notin\mathcal{H}_n$, we analyze their suboptimality in the following theorem.
{Given the MDP $\mathcal{C}(\rho)$, we say that a policy $\mu$ is $\varepsilon$-optimal for an initial state $h_0$ if its corresponding value function, denoted as $\varphi^\mu$, satisfies the condition $|\varphi^\mu(h_0) - \varphi(h_0)| \leq \varepsilon$.}

\begin{theorem} \label{thm:eps-optimal}
	Given an IOMDP $\mathcal{C}(\rho)$ and any $\varepsilon>0$, both TA($L+n$) and TA($n,L$) policies are $\varepsilon$-optimal for any initial state $h_0\in \mathcal{G}_0$ if $n$ and $L$ satisfy the following:
	\begin{align*}
		n\ge \frac{\log \varepsilon(1-\beta)/(2K-\varepsilon\beta\rho) }{\log \beta(1-\rho)} -1,\\
		L\ge \frac{\log\epsilon(n)(1-\beta)^2/K(1-\beta \rho) }{\log \beta(1-\rho)},
	\end{align*}
	where $K=\max_{h,a}|\bar{R}(h,a)|$ is the maximum absolute value of the reward function.
\end{theorem}
\begin{proof}
	See Appendix C.
\end{proof}

A limitation of the TA($L+n$) and TA($n,L$) policies is that, even with $n$ and $L$ satisfying Theorem \ref{thm:eps-optimal}, they are guaranteed to be $\varepsilon$-optimal only when the initial position state is in $\mathcal{G}_0$. In other words, if the initial position state $h_0\notin \mathcal{G}_0$, both policies may exhibit a regret (i.e., $|\varphi^u(h_0)-\varphi(h_0)|$) that exceeds $\varepsilon$. Fortunately, this limitation can be easily remedied by adding an additional node to the root layer. For any initial belief state $\mathbf{b}_0\notin \{\mathbf{e}_i:i\in\mathcal{S} \}$, we can define a virtual node $h_0$ in the root layer. Subsequently, we define its descendants at each layer using \eqref{eq:Gn} and associate each descendant with a belief using \eqref{eq:g(h)}, simply replacing $\mathbf{e}_s$ with $h_0$. Doing so allows TA($L+n$) and TA($n,L$) policies to achieve $\varepsilon$-optimal value for any initial belief state.

\section{Nested Value Iteration} \label{sec:NVI}
We have shown in the previous section that solving for TA($L$) and TA($n,L$) could offer near-optimal policies for the IOMDP. This section presents a variant of the value iteration algorithm, called nested value iteration (NVI), to compute the optimal policy for TA($L$) efficiently. It also applies to TA($n,L$).

The algorithm is inspired by the special structure of the tree MDP. The significance of position states at the root layer (i.e., ${\cal G}_0$) in the Bellman equation is evident, as their values directly influence the values of all position states, as indicated by \eqref{eq:bellman-C}. 
Intuitively, for a non-zero $\rho$, the larger the layer index of $h$, the weaker the influence of  $h$’s value on the overall value function. The basic idea of NVI is to update the values of important position states more frequently than the less important ones, as shown in Algorithm \ref{al:NVI}.

In principle, based on the tree structure of the state space of  ${{\cal C}_L}(\rho )$, we define such a collection of $d$ nested sets $\{\mathcal{X}_i:1\le i\le d \}$ that the largest set is the entire state space (i.e., $\mathcal{X}_d=\mathcal{H}_L$) and $\mathcal{X}_i\subseteq \mathcal{X}_{i+1}$ for $i\in \{1,2,\cdots,d-1\}$.
As commented in Algorithm \ref{al:NVI}, NVI consists of outer and inner iterations. Each outer iteration consists of $d$ inner iterations, where the  $l$-th inner iteration only updates the values of position states in set ${\cal X}_{d-l+1}$ ($1 \le l\le d $). Consequently, position states in ${\cal X}_1$ are updated in every inner iteration, while position states in ${\cal X}_d - \mathcal{X}_{d-1}$ are updated once in every outer iteration (i.e., every  $d$ inner iterations).

\begin{algorithm}[t]
	\caption{Nested Value Iteration (NVI) }
	\label{al:NVI}
	\KwIn{${{\cal C}_L}(\rho ) = ({{\cal H}_L},{\cal A},\rho ,D_L ,\bar R,\beta )$, a positive integer $d$. \\		
	}
	\textbf{Initialization:} select the initial value function $\psi^0$, specify $\sigma >\epsilon>0$, and set $n=0$. Define ${{\cal X}_d} :=  \mathcal{H}_L$ and $\mathcal{X}_i\subseteq \mathcal{X}_{i+1}$ for $i\in\{1,2,\cdots,d-1 \}$.   \\	
	\While{$\sigma>\epsilon$}{					
		// for loop 1: {\bf outer iteration} \\
		\For{ $l=0$ to $d-1$}{
			// for loop 2: {\bf inner iteration} \\
			\For{each $h\in {\cal X}_{d-l}$}{
				${\psi ^{n + 1}}(h) = \mathop {\max }\limits_{a \in {\cal A}} \left\{ {\bar R(h,a) + \beta \sum\limits_{h' \in {{\cal H}_L}} {{D_L}\left( {h'|h,a} \right){\psi ^n}(h')} } \right\}$
			}
			${\psi ^{n + 1}}(h) = {\psi ^n}(h), \forall h \notin {{\cal X}_{d - l}}$ \\	
			\If{$l=0$}{
				$\sigma  = {\left\| {{\psi ^{n + 1}} - {\psi ^n}} \right\|_\infty }$ \quad // max norm	\\
				\If{$\sigma\le \epsilon$}{
					{\bf Return} ${\psi ^{n + 1}}$ 
				}
			}
			
			$n=n+1$ \\
		}		
	}
\end{algorithm}

It is worth noting that the NVI allows for multiple definitions of $\{\mathcal{X}_i\}$.
In fact, we expect that NVI would be effective in a broad class of MDPs, provided that $\{\mathcal{X}_i\}$ is properly defined.
For the IOMDP we focused on in this paper, we have identified several definitions that make the NVI more efficient than the standard value iteration. Among these definitions, the following one in particular demonstrates exceptional performance:
\begin{align} \label{eq:nestedset}
	\mathcal{X}_d=\mathcal{H}_L, \mathcal{X}_i=\mathcal{H}_1 \text{ for all }1\le i\le d-1,
\end{align}
where the positive integer $d$ is an adjustable parameter.
The motivation behind this definition is to leverage the importance of the root layer in enhancing the overall algorithm performance. There are two key factors contributing to this: (i) position states at the root layer hold a pivotal role in the Bellman equation, exerting a significant influence; (ii) conducting additional operations on this layer entails minimal computational cost, as the root layer represents only a small subset of the entire state space.

\begin{theorem} \label{thm:NVI}
	Let $\{\psi^n\}$ denote the sequence of functions generated by the nested value iteration algorithm with input ${{\cal C}_L}(\rho )$ and a positive integer $d$. Then
	\begin{itemize}
		\item [(1)] For any collection of nested sets $\{\mathcal{X}_i \}$ that satisfies $\mathcal{X}_d=\mathcal{H}_L$,  $\psi^n$ converges in max norm to the optimal value function $\varphi_L$;
		\item [(2)] If the collection of nested sets $\{\mathcal{X}_i \}$ is defined by \eqref{eq:nestedset}, then 
		\begin{align*}
			&\left|\left| {{\psi ^{kd + 1}} - {\psi^{kd}}} \right|\right|_\infty \le   \\
			&\left[ {\frac{{\beta (1 - \rho )}}{{1 - \beta \rho }} + \frac{{{\beta ^{d}}{\rho ^{d}}(1 - \beta )}}{{1 - \beta \rho }}} \right] \left|\left| {{\psi ^{(k-1)d + 1}} - {\psi^{(k-1)d}}} \right|\right|_\infty
		\end{align*}
		for $k\in \{1,2,3,\cdots\}$, where $||\cdot||_\infty$ denotes the max norm. 
	\end{itemize}
\end{theorem}
\begin{proof}
	See Appendix D.
	
\end{proof}

Theorem \ref{thm:NVI} shows how the additional updates over the root layer accelerate the convergence. Note that for any $d>1$, 
\begin{align*}
	{\frac{{\beta (1 - \rho )}}{{1 - \beta \rho }} + \frac{{{\beta ^{d}}{\rho ^{d}}(1 - \beta )}}{{1 - \beta \rho }}} < {\frac{{\beta (1 - \rho )}}{{1 - \beta \rho }} + \frac{{\beta\rho(1 - \beta )}}{{1 - \beta \rho }}} = \beta.
\end{align*}
It is well known that the standard value iteration exhibits a contraction factor of $\beta$ across iterations. Hence Theorem \ref{thm:NVI} establishes that the NVI demonstrates a reduced contraction factor (across outer iterations) compared to the standard value iteration. The additional operations involved in an outer iteration, in contrast to an iteration in the standard value iteration, can usually be considered negligible as $\mathcal{H}_1$ is only a small subset of $\mathcal{H}_L$.   

Formally, we demonstrate NVI's efficiency by conducting an analysis of its computational complexity. 
Following convention, we say that a policy $\mu$ is $\varepsilon$-optimal if its value function $\psi^\mu$ satisfies $|\psi^\mu(h)-\varphi_L(h)|\le \varepsilon$ for all $h\in \mathcal{H}_L$.
It is well known that the standard value iteration can find an $\varepsilon$-optimal policy in time $O\left(|\mathcal{H}_L|^2 |\mathcal{A}| \frac{\log 1/\varepsilon(1-\beta)}{1-\beta} \right)$, which can be written as follows by substituting the expression of $\mathcal{H}_L$ given by \eqref{eq:HL}:
\begin{align*}
	O\left(|\mathcal{S}|^2 |\mathcal{A}|^{2L+1} \frac{\log 1/(\varepsilon(1-\beta))}{1-\beta} \right).
\end{align*}
Based on Theorem \ref{thm:NVI}, we can obtain the following result. It shows that the proposed algorithm enhances the standard value iteration by reducing the number of iterations needed for convergence, with a cost of a slight increase in the number of operations within each iteration.  

\begin{theorem} \label{thm:complexity}
	Given ${{\cal C}_L}(\rho )$ with root layer  $\mathcal{G}_0=\mathcal{S}$, the NVI algorithm with $\{\mathcal{X}_i \}$ defined by \eqref{eq:nestedset} can be implemented to obtain an $\varepsilon$-optimal policy in time
	\begin{align*}
		O\left(|\mathcal{S}|^2\left(d|\mathcal{A}|^3+|\mathcal{A}|^{2L+1}\right)  \frac{\log 1/\varepsilon(1-\beta) }{1-\beta} \frac{1-\beta\rho}{1-\beta^{d}\rho^{d}} \right).
	\end{align*}
\end{theorem}
\begin{proof}
	See Appendix D.
	
\end{proof}

The above theorem demonstrates the dependency of the proposed algorithm on all related factors, including the inherent factors of the tree MDP $\mathcal{C}_L(\rho)$ ($|\mathcal{S}|$, $|\mathcal{A}|$, $\beta$, $L$ and $\rho$) and the adjustable parameter $d$ in our approach. 
It illustrates the impact of the parameter $d$ on the algorithm's efficiency: as $d$ increases, the number of outer iterations required for convergence decreases approximately at a rate of $(1-\beta\rho)/(1-\beta^d\rho^d)$, while the number of additional operations within each outer iteration grows linearly with $d$. {The analysis shows that the optimal efficiency of the NVI algorithm is achieved by choosing $d$ to minimize the term $\left(d|\mathcal{A}|^3+|\mathcal{A}|^{2L+1}\right)/(1-\beta^{d}\rho^{d})$.  }
%Since $(1-\beta\rho)/(1-\beta^d\rho^d)$ converges to $1-\beta \rho$ quickly, in practical terms, a large value of $d$ is unnecessary.

Finally, extending NVI to solve TA($n,L$) is straightforward. For ${\cal C}^n_L (\rho)$ with $n\ge 1$, the action of $h \in  \cup _{i = 0}^{n - 1}{{\cal Z}_i}$ is already determined, hence there is no need to update the value of these position states. The values of root position states are identical to that of  ${\cal C}^0_L (\rho)$, which can be used to update the value of other position states. For example, we can modify \eqref{eq:nestedset} to obtain a definition of the collection of nested sets as follows:
\begin{align*}
	{\cal X}_{n,d} = {{\cal Z}_n} \cup \left( { \cup _{i = 1}^L{\cal G}_{n + l}^ - } \right),
	{\cal X}_{n,l} = {{\cal Z}_n} \cup {\cal G}^-_{n + 1},
\end{align*}
for all $1\le l \le d-1$.
Replacing ${\cal X}_l$ with  ${\cal X}_{n,l}$ in Algorithm \ref{al:NVI}, NVI can be used to solve TA($n,L$).
Our previous discussions also imply that using the optimal value function of ${\cal C}^{n-1}_L (\rho)$  as the starting point would help NVI converge quickly to the optimal value function of  ${\cal C}^n_L (\rho)$, as the values of root position states are already optimum.

\section{Numerical Results}   \label{sec:exp}
We present numerical results to demonstrate the effectiveness of our methods. Throughout this section, the state information transmissions of all experiments were simulated by Bernoulli trials generated by Matlab on PC. At each time step, the controller of the IOMDP receives the current state information only if the outcome of the Bernoulli trail is 1 (another outcome is 0). The probability of the outcome being 1 in each Bernoulli trial corresponds to the transmission success probability (i.e., SIRP) of the experiment. 

% Please add the following required packages to your document preamble:
% \usepackage{multirow}
% \usepackage[table,xcdraw]{xcolor}
% If you use beamer only pass "xcolor=table" option, i.e. \documentclass[xcolor=table]{beamer}
\begin{table}[h]
	\centering
	\begin{tabular}{|ll|c|c|c|}
		\hline
		\multicolumn{2}{|l|}{}                                                       & NVI-1                         & NVI-2                           & VI                        \\ \hline
		\multicolumn{1}{|c|}{}                    & \cellcolor[HTML]{DAE8FC}Time (s) & \cellcolor[HTML]{DAE8FC}11  & \cellcolor[HTML]{DAE8FC}16   & \cellcolor[HTML]{DAE8FC}41   \\ \cline{2-5} 
		\multicolumn{1}{|c|}{\multirow{-2}{*}{\begin{tabular}[c]{@{}c@{}}$|{\cal S}|=40,|{\cal A}|=3$\\ $\rho=0.7,L=6$\end{tabular}} } & Iteration                        & 13                          & 13                           & 49                          \\ \hline
		\multicolumn{1}{|l|}{}                    & \cellcolor[HTML]{DAE8FC}Time (s) & \cellcolor[HTML]{DAE8FC}48  & \cellcolor[HTML]{DAE8FC}80  & \cellcolor[HTML]{DAE8FC}217  \\ \cline{2-5} 
		\multicolumn{1}{|l|}{\multirow{-2}{*}{\begin{tabular}[c]{@{}c@{}}$|{\cal S}|=80,|{\cal A}|=4$\\ $\rho=0.8,L=5$\end{tabular}}} & Iteration                        & 12                          & 16                           & 56                          \\ \hline
		\multicolumn{1}{|l|}{}                    & \cellcolor[HTML]{DAE8FC}Time (s) & \cellcolor[HTML]{DAE8FC}65 & \cellcolor[HTML]{DAE8FC}137  & \cellcolor[HTML]{DAE8FC}356  \\ \cline{2-5} 
		\multicolumn{1}{|l|}{\multirow{-2}{*}{\begin{tabular}[c]{@{}c@{}}$|{\cal S}|=100,|{\cal A}|=5$\\ $\rho=0.9,L=4$\end{tabular}}} & Iteration                        & 9                          & 17                           & 50                          \\ \hline
		\multicolumn{1}{|l|}{}                    & \cellcolor[HTML]{DAE8FC}Time (s) & \cellcolor[HTML]{DAE8FC}375 & \cellcolor[HTML]{DAE8FC}580 & \cellcolor[HTML]{DAE8FC}2051 \\ \cline{2-5} 
		\multicolumn{1}{|l|}{\multirow{-2}{*}{\begin{tabular}[c]{@{}c@{}}$|{\cal S}|=100,|{\cal A}|=5$\\ $\rho=0.9,L=5$\end{tabular}}} & Iteration                        & 9                          & 14                           & 50                          \\ \hline
		\multicolumn{1}{|l|}{}                    & \cellcolor[HTML]{DAE8FC}Time (s) & \cellcolor[HTML]{DAE8FC}84 & \cellcolor[HTML]{DAE8FC}136  & \cellcolor[HTML]{DAE8FC}432  \\ \cline{2-5} 
		\multicolumn{1}{|l|}{\multirow{-2}{*}{\begin{tabular}[c]{@{}c@{}}$|{\cal S}|=200,|{\cal A}|=3$\\ $\rho=0.9,L=5$\end{tabular}}} & Iteration                        & 10                          & 15                           & 54                          \\ \hline
	\end{tabular}
	\caption{Computation time and iterations of different algorithms for solving TA($L$).}
	\label{tab:NVI-perf}
\end{table}

\subsection{Efficiency of NVI}
We conducted a comparison between NVI and standard value iteration (VI) using a set of randomly generated MDPs, as presented in Table \ref{tab:NVI-perf}. Two distinct definitions of $\{\mathcal{X}_i \}$ were examined: NVI-1 adopts the $\{\mathcal{X}_i \}$ definition provided by \eqref{eq:nestedset}, while NVI-2 utilizes the following alternative definition: let $d=L$ and define
\begin{align*}
	\mathcal{X}_l= {{\cal H}_l} =  \cup _{i = 0}^l{{\cal G}_i},\quad l=1,2,\cdots,L.
\end{align*} 
For each experiment in Table \ref{tab:NVI-perf}, the transition matrices and the reward function of the underlying MDP were generated randomly with given state and action spaces. We then constructed the associated TA($L$) and computed its optimal policy using the three algorithms.
In all experiments, NVI-1 and NVI-2 converge to the same value function as VI, leading to the same policy. 
The results clearly demonstrate that both NVI algorithms converge significantly faster than the standard value iteration, with NVI-1 outperforming NVI-2. Additionally, we provide the respective number of iterations required for convergence for each algorithm, with the number of outer iterations counted for NVI-1 and NVI-2. Notably, the number of iterations for both NVIs is substantially lower than that of VI, which aligns with our theoretical analysis. 
It is worth noting that in NVI-2, $d$ is set as $L$ across all groups, whereas in NVI-1, $d$ is randomly selected in each group. Interestingly, we observed that varying the value of $d$ in a reasonable range (e.g., $[L,2L]$) only led to minor fluctuations in performance. This finding demonstrates the robustness of NVI-1 with respect to the parameter $d$.
In summary, NVI is more efficient than VI. In the remaining part of this section, TA($L$) and TA($n,L$) are solved using NVI in all experiments.

\subsection{Case Study: Unmanned Boat with Remote Sensor}
{ We next evaluate the effectiveness of TA($L$) and TA($n,L$) in an IOMDP scenario inspired by a practical application. Specifically, consider an unmanned boat tasked with remaining within a designated target water area to perform monitoring activities. The boat’s movement is prone to failure with a certain probability due to the influence of water waves.
	The boat is permitted to navigate within a predefined operational area, and leaving this area results in the termination of the task. Its position is tracked by a remote sensor, which transmits the location information to the boat at each time step. However, the long distance between the sensor and the boat introduces a communication failure probability of $1-\rho$. To simplify the problem, we discretize the water area into 9 segments, resulting in an MDP with 9 states and 4 actions. Our methods are assessed in this simplified model. Further details about the MDP are provided in Appendix E.
}

As shown in Table \ref{tab:hota}, we consider 4 values of $\rho$. For each $\rho$, we solved the associated TA($L$) using NVI and obtained the TA($L$) policy to control the IOMDP with SIRP $\rho$. We then collected the total discounted reward over a horizon of $10^5$ time steps. The value of each group is the empirical average total discounted reward of $2\times10^4$ independent runs.
The results of TA($L+n$) and TA($n,L$) were obtained by the same procedure. In all experiments, the policies generated by TA($L+n$) and TA($n,L$) are identical; hence their values are equal. We also present the computation time of each method in the table, where the computation time of TA($n,L$) includes the time of solving TA($k,L$) for all $0\le k\le n$.
The results show that TA($L$) with a small $L$ can not generate a satisfactory policy when $\rho$  is small, but we can obtain a better policy by increasing $L$. However, the computation time of TA($L$) increases explosively with $L$. Fortunately, TA($n,L$) can generate the same policy as TA($L+n$) using much less computation time.

\begin{table}[t]
	\centering
	\begin{tabular}{|ll|c|c|c|}
		\hline
		\multicolumn{2}{|l|}{}                                                      & TA($L$)                       & TA($L+n$)                     & TA($n,L$)                     \\ \hline
		\multicolumn{1}{|c|}{}                      & \cellcolor[HTML]{DAE8FC}Value & \cellcolor[HTML]{DAE8FC}366 & \cellcolor[HTML]{DAE8FC}368 & \cellcolor[HTML]{DAE8FC}368 \\ \cline{2-5} 
		\multicolumn{1}{|c|}{\multirow{-2}{*}{$\rho=0.9$}} & Time (s)                      & 0.48                        & 34                          & 0.64                        \\ \hline
		\multicolumn{1}{|l|}{}                      & \cellcolor[HTML]{DAE8FC}Value & \cellcolor[HTML]{DAE8FC}310 & \cellcolor[HTML]{DAE8FC}318 & \cellcolor[HTML]{DAE8FC}318 \\ \cline{2-5} 
		\multicolumn{1}{|l|}{\multirow{-2}{*}{$\rho=0.8$}} & Time (s)                      & 0.52                        & 34                          & 0.65                        \\ \hline
		\multicolumn{1}{|l|}{}                      & \cellcolor[HTML]{DAE8FC}Value & \cellcolor[HTML]{DAE8FC}196 & \cellcolor[HTML]{DAE8FC}215 & \cellcolor[HTML]{DAE8FC}215 \\ \cline{2-5} 
		\multicolumn{1}{|l|}{\multirow{-2}{*}{$\rho=0.6$}} & Time (s)                      & 0.45                        & 30                          & 0.72                        \\ \hline
		\multicolumn{1}{|l|}{}                      & \cellcolor[HTML]{DAE8FC}Value & \cellcolor[HTML]{DAE8FC}155 & \cellcolor[HTML]{DAE8FC}175 & \cellcolor[HTML]{DAE8FC}175 \\ \cline{2-5} 
		\multicolumn{1}{|l|}{\multirow{-2}{*}{$\rho=0.5$}} & Time (s)                      & 0.44                        & 30                          & 0.80                        \\ \hline
	\end{tabular}
	\caption{Performance of TA($L$) and TA($n,L$), $L=2,n=4$.}
	\label{tab:hota}
\end{table}

\begin{figure}[!t]
	\centering
	\begin{subfigure}[b]{0.4\linewidth}
		\includegraphics[width=\linewidth]{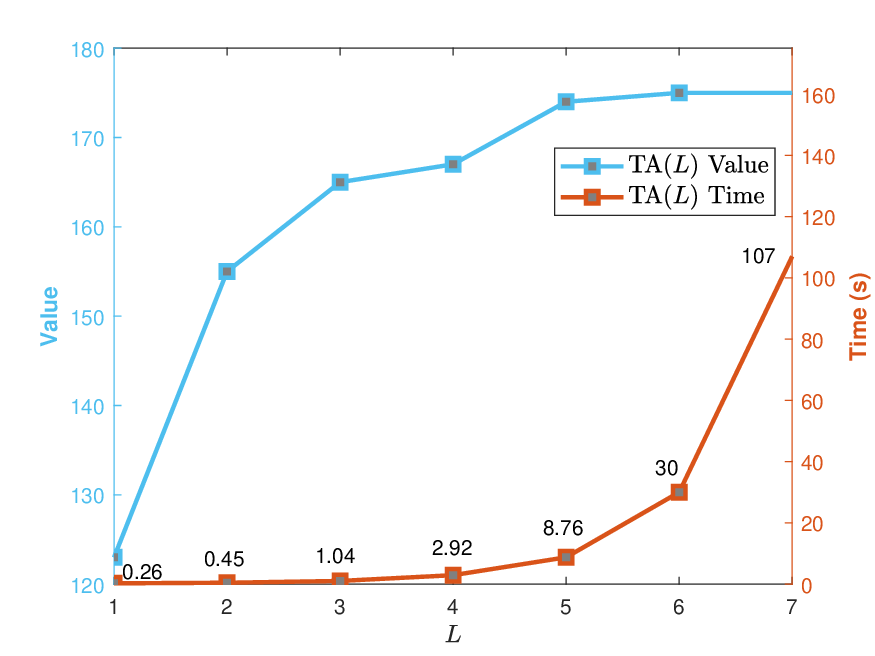}
		\caption{TA($L$) performance.}
		\label{fig:expTAL}
	\end{subfigure}
	\begin{subfigure}[b]{0.4\linewidth}
		\includegraphics[width=\linewidth]{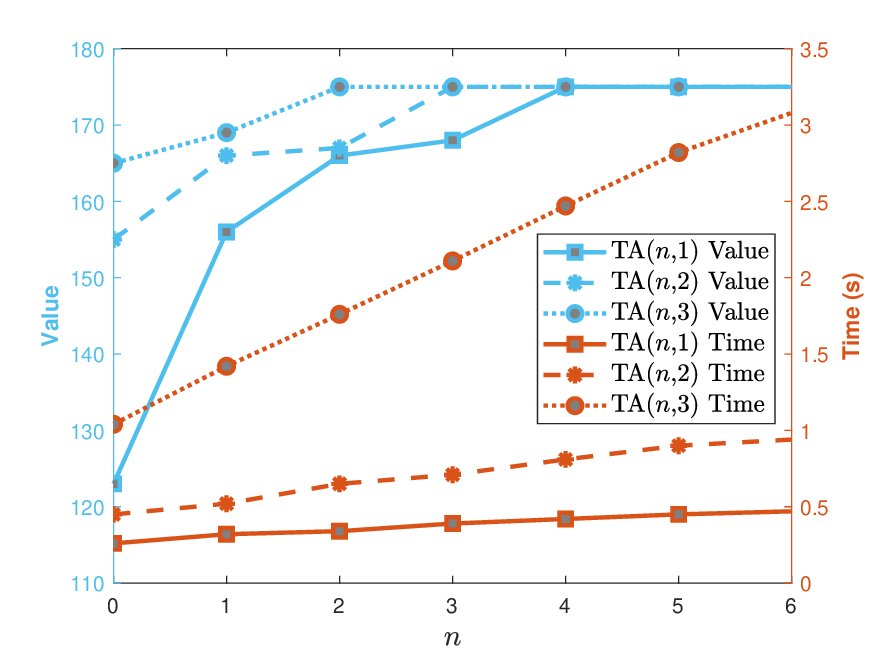}
		\caption{TA($n,L$) performance.}
		\label{fig:expHota}
	\end{subfigure}
	
	\caption{Values and computation time of TA($L$) and TA($n,L$) policies.}
	\label{fig:exp}
\end{figure}

We carried out more experiments for the last group ($\rho = 0.5$) of Table \ref{tab:hota} to show how $L$ and $n$ affect the performance of TA($L$) and TA($n,L$). Fig.~\ref{fig:expTAL} shows that the value of the TA($L$) policy increases with $L$ and eventually becomes stable after $L\ge 6$. This verifies our analysis in Section \ref{sec:HOTA}---the larger the value of $L$, the TA($L$) policy is optimal for more position states, hence the closer the TA($L$) policy is to the optimal policy. Given an initial position state in ${\cal G}_0$, the value difference between the TA($L$) policy and the optimal policy diminishes quickly as $L$ increases. The disadvantage of TA($L$) is also obvious: the computation time increases exponentially with $L$. In Fig.~\ref{fig:expHota}, we fix the value of $L$ and examine how the parameter $n$ affects the value and the computation time of the TA($n,L$) policy. Three different $L$ are compared. TA($n,L$) shows excellent performance in this experiment---it achieves the same value as TA($L+n$) with much less computation time. It is worth noting that the computation time of TA($n,L$) only increases linearly with $n$, demonstrating excellent scalability.

\begin{table}[t]
	\centering
	\begin{tabular}{|l|c|c|c|}
		\hline
		& TA(2) & TA(3) & TA(6) \\ \hline
		$|{\cal S}|=20,|{\cal A}|=4$ & 69    & 69    & 69    \\ \hline
		$|{\cal S}|=30,|{\cal A}|=3$ & 90    & 90    & 90    \\ \hline
		$|{\cal S}|=40,|{\cal A}|=4$ & 192   & 193   & 193   \\ \hline
		$|{\cal S}|=50,|{\cal A}|=5$ & 395   & 397   & 397   \\ \hline
	\end{tabular}
	\caption{Values of TA($L$) policies ($\rho=0.9$).}
	\label{tab:tal}
\end{table}

The above experiments show that, compared with TA($n,L$), TA($L$) is much more inefficient and unscalable. 
However, we can show that TA($L$) with small  $L$ is good enough in some scenarios. The experiments were conducted in IOMDPs generated randomly as before. We set the SIRP to be $\rho=0.9$ and the initial position state belong to ${\cal G}_0$ in all these experiments. Table \ref{tab:tal} shows the values of multiple TA($L$) policies, where the value of each group is obtained by the same procedure as in Table \ref{tab:hota}. 
In these experiments, the performance of TA(2) is (nearly) the same as that of TA(6), implying that TA(2) is already near-optimal. Considering that TA($L$) is easier to implement than TA($n,L$), it may be a good choice to use TA($L$) when a small $L$  is good enough.

\section{Conclusion}  \label{sec: conl}
This paper studied intermittently observable MDPs. We assumed that the transmissions of state information follow a Bernoulli lossy process. The problem of selecting actions for the MDP in the presence of state information losses has been investigated systematically. Two methods of formulating the IOMDP were used for different purposes. We first formulated the problem as a belief MDP to analyze the effect of state information losses. We proved that the expected total discounted reward is a continuous and increasing function of the state information reception probability. An upper bound for the performance regret caused by state information losses is derived.
Then, we reformulated the IOMDP as a tree MDP to develop finite-state approximations, TA($L$) and TA($n,L$), for the problem. We showed analytically that TA($L$) could well approximate the optimal value function for part of the states, thus allowing it to generate a near-optimal policy. Inspired by the approximation error bound of TA($L$), we further developed TA($n,L$), which is more efficient than TA($L$) because it excludes the redundant states when computing policies.
In addition, we also proposed a nested value iteration algorithm for TA($L$) and TA($n,L$). Convergence and computational complexity analysis is provided for interpreting the efficiency of the algorithm. Finally, we validated the effectiveness of the proposed methods by numerical results.

%From the perspective of communication, the transmission model of state information depends on the environment and the communication protocol adopted by the system. The Bernoulli process considered in this paper is a proper transmission model if the system adopts a time-division multiple access (TDMA) protocol and operates in a stationary environment. In this case, the SIRP can be easily measured by counting the empirical success rate of a number of transmissions. A natural direction of extending the IOMDP framework studied in this paper is to generalize the transmission model. (1) If the communication environment is time-varying (e.g., the sensor or the controller moves significantly over time), then the SIRP may change dynamically over time. For problems of this kind, we may need to take the worst-case performance into account when making decisions. The structural results established in Section \ref{sec:Structural} may be exploited to analyze these problems. (2) If the system adopts other protocols, we may need a more complicated transmission model (e.g., a Markov model is usually used for transmissions over WiFi networks). These present interesting questions for future research.

\section*{Appendix A}    \label{app:sec3}

% Note: in this sample, the section number is hard-coded in. Following
% proper LaTeX conventions, it should properly be coded as a reference:

\subsection*{A.1 Proof of Theorem \ref{theo:phi_b}}
In this part we prove Theorem \ref{theo:phi_b} from Section~\ref{sec:Structural}. For notation simplicity, $T_\rho ^\pi ({\bf{x}},{\bf{y}})$ will stand for the probability of the belief state transitioning from ${\bf{x}} \in {\cal W}$  to  ${\bf{y}} \in {\cal W}$, i.e., 
\begin{align*}
	T_\rho ^\pi ({\bf{x}},{\bf{y}}) = \Pr \left( {{{\bf{b}}_t} = {\bf{y}}|{{\bf{b}}_{t - 1}} = {\bf{x}},{a_{t - 1}} = \pi ({\bf{x}})} \right).
\end{align*}

\begin{proof}[Theorem \ref{theo:phi_b}]
	Consider $0<\rho \le v \le 1$. Suppose that $\pi_\rho \in \Pi$ is an optimal policy for ${\cal B}(\rho)$. Let $T_\rho ^{{\pi _\rho }}$ denote the transition matrix of ${\cal B}(\rho)$ under policy $\pi_\rho$. Then
	\begin{align} \label{eq:thm2-1}
		\phi [\rho ] = {\phi ^{{\pi _\rho }}}[\rho ] = {R_{{\pi _\rho }}} + \beta T_\rho ^{{\pi _\rho }}{\phi ^{{\pi _\rho }}}[\rho ].
	\end{align}
	Note that policy $\pi_\rho$ can also be applied to the belief MDP ${\cal B}(v)$. We have
	\begin{align} \label{eq:thm2-2}
		\phi [v] \ge {\phi ^{{\pi _\rho }}}[v] = {R_{{\pi _\rho }}} + \beta T_v^{{\pi _\rho }}{\phi ^{{\pi _\rho }}}[v].
	\end{align}
	From \eqref{eq:thm2-1} and \eqref{eq:thm2-2},
	\begin{align}
		&{\phi ^{{\pi _\rho }}}[v] - {\phi ^{{\pi _\rho }}}[\rho ] = \notag \\
		&\beta T_v^{{\pi _\rho }}\left( {{\phi ^{{\pi _\rho }}}[v] - {\phi ^{{\pi _\rho }}}[\rho ]} \right) + \beta \left( {T_v^{{\pi _\rho }} - T_\rho ^{{\pi _\rho }}} \right){\phi ^{{\pi _\rho }}}[\rho ].
	\end{align}
	It follows that
	\begin{align} \label{eq:thm2-diff}
		{\phi ^{{\pi _\rho }}}[v] - {\phi ^{{\pi _\rho }}}[\rho ] = \beta {\left( {I - \beta T_v^{{\pi _\rho }}} \right)^{ - 1}}\left( {T_v^{{\pi _\rho }} - T_\rho ^{{\pi _\rho }}} \right){\phi ^{{\pi _\rho }}}[\rho ].
	\end{align}
	Note that ${\left( {I - \beta T_v^{{\pi _\rho }}} \right)^{ - 1}}$ exists and is non-negative (i.e., all entries are non-negative) because $T_v^{{\pi _\rho }}$  is a stochastic matrix. Specifically,
	\begin{align*}
		{\left( {I - \beta T_v^{{\pi _\rho }}} \right)^{ - 1}} = \sum\limits_{n = 0}^\infty  {{\beta ^n}{{(T_v^{{\pi _\rho }})}^n}}.
	\end{align*}
	Let $\eta  = \left( {T_v^{{\pi _\rho }} - T_\rho ^{{\pi _\rho }}} \right){\phi ^{{\pi _\rho }}}[\rho ]$. Just like ${\phi ^{{\pi _\rho }}}( \cdot ,\rho )$, $\eta$ can be viewed as a function on set ${\cal W}$. For an arbitrary $\mathbf{b}\in {\cal W}$, let $\tau  = P_{{\pi _\rho }({\bf{b}})}^\top{\bf{b}}$. Then we can express $\eta$ in component notation as
	\begin{align*}
		\eta ({\bf{b}}) =& \left( {T_v^{{\pi _\rho }}({\bf{b}},\tau ) - T_\rho ^{{\pi _\rho }}({\bf{b}},\tau )} \right){\phi ^{{\pi _\rho }}}(\tau ,\rho ) \\
		&+ \sum\limits_{i \in {\cal S}} {\left( {T_v^{{\pi _\rho }}({\bf{b}},{{\bf{e}}_i}) - T_\rho ^{{\pi _\rho }}({\bf{b}},{{\bf{e}}_i})} \right){\phi ^{{\pi _\rho }}}({{\bf{e}}_i},\rho )} \\
		=& (v - \rho )\left[ {\sum\limits_{i \in {\cal S}} {\tau (i){\phi ^{{\pi _\rho }}}({{\bf{e}}_i},\rho ) - {\phi ^{{\pi _\rho }}}(\tau ,\rho )} } \right].
	\end{align*}
	Note that $\tau  = \sum\limits_{i \in {\cal S}} {\tau (i){{\bf{e}}_i}} $ and that ${\phi ^{{\pi _\rho }}}({\bf{x}},\rho ) = \phi ({\bf{x}},\rho )$ is a convex function of ${\bf x} \in {\cal W}$. It follows that
	\begin{align*}
		\sum\limits_{i \in {\cal S}} {\tau (i){\phi ^{{\pi _\rho }}}({{\bf{e}}_i},\rho ) - {\phi ^{{\pi _\rho }}}(\tau ,\rho )}  \ge 0.
	\end{align*}
	Therefore, $\eta ({\bf{b}}) \ge 0$ for any ${\bf{b}} \in {\cal W}$. The product of a non-negative matrix and a non-negative vector is clearly non-negative. That is,
	\begin{align}
		{\phi ^{{\pi _\rho }}}[v] - {\phi ^{{\pi _\rho }}}[\rho ] = \beta {\left( {I - \beta T_v^{{\pi _\rho }}} \right)^{ - 1}}\eta  \ge 0.
	\end{align}
	It follows immediately that $\phi [v] \ge {\phi ^{{\pi _\rho }}}[v] \ge {\phi ^{{\pi _\rho }}}[\rho ] = \phi [\rho ]$. This completes the proof.

\end{proof}

\subsection*{A.2 Proof of Theorem \ref{thm:regretbound}}
This part proves Theorem \ref{thm:regretbound} from Section~\ref{sec:Structural}.

\begin{proof}
	Consider an arbitrary $\rho\in (0,1)$. Let $v\in [\rho,1]$ and $\pi_v$ denote an optimal policy for ${\cal B}(v)$. It follows from Theorem \ref{theo:phi_b} that ${\phi ^{{\pi _v}}}[v] = \phi [v] \ge \phi [\rho ] \ge {\phi ^{{\pi _v}}}[\rho ]$. Therefore,
	\begin{align}
		&{\left\| {\phi [v] - \phi [\rho ]} \right\|_\infty } =\notag \\
		& \mathop {\max }\limits_{{\bf{b}} \in {\cal W}} \left| {\phi ({\bf{b}},v) - \phi ({\bf{b}},\rho )} \right| \le {\lVert {{\phi ^{{\pi _v}}}[v] - {\phi ^{{\pi _v}}}[\rho ]} \rVert_\infty }
	\end{align}
	where $||\cdot||_\infty$ denotes the max norm. Using a similar argument as in \eqref{eq:thm2-diff} yields
	\begin{align} \label{eq:thm5-diff}
		{\phi ^{{\pi _v}}}[v] - {\phi ^{{\pi _v}}}[\rho ] = \beta {\left( {I - \beta T_\rho ^{{\pi _v}}} \right)^{ - 1}}\left( {T_v^{{\pi _v}} - T_\rho ^{{\pi _v}}} \right)\phi [v].
	\end{align}
	Let $\eta  = \left( {T_v^{{\pi _v}} - T_\rho ^{{\pi _v}}} \right)\phi [v]$. Then for any ${\bf b}\in {\cal W}$, let $\tau  = P_{{\pi _v}({\bf{b}})}^\top{\bf{b}}$, we have
	\begin{align} \label{eq:thm5-eta}
		\eta ({\bf{b}}) = \beta (v - \rho )\left[ {\sum\limits_{i \in {\cal S}} {\tau (i)\phi ({{\bf{e}}_i},v) - \phi (\tau ,v)} } \right].
	\end{align}
	Since the reward function is bounded, the value function $\phi(\cdot,v)$ is also bounded. The convexity of $\phi(\cdot,v)$  implies that the term within the square brackets is non-negative. As a result,
	\begin{align*}
		{\sigma _v} \buildrel \Delta \over = \mathop {\max }\limits_{\tau  \in {\cal W}} \left\{ {\sum\limits_{i \in {\cal S}} {\tau (i)\phi ({{\bf{e}}_i},v) - \phi (\tau ,v)} } \right\}
	\end{align*}
	is non-negative and bounded. It follows that
	\begin{align}
		0 \le {\phi ^{{\pi _v}}}[v] - {\phi ^{{\pi _v}}}[\rho ] &\le \beta (v - \rho ){\left( {I - \beta T_v^{{\pi _v}}} \right)^{ - 1}}{\sigma _v}{\bf{e}} \notag \\
		&= (v - \rho )\frac{{\beta {\sigma _v}}}{{1 - \beta }}{\bf{e}},
	\end{align}
	where ${\bf e}$ is the vector with all elements being 1. Clearly, for any $\epsilon>0$,
	\begin{align*}
		\left| {v - \rho } \right| < \epsilon \left( {1 - \beta } \right)/\beta {\sigma _v}\  \to \ \mathop {\max }\limits_{{\bf{b}} \in {\cal W}} \left| {\phi ({\bf{b}},v) - \phi ({\bf{b}},\rho )} \right| \le \epsilon. 
	\end{align*}
	Using a similar argument could show that the above conclusion holds for $v\le \rho$. This proves the continuity of $\phi({\bf b},\rho)$.
	
	The bound for $\phi[1]-\phi[\rho]$ is obtained by substituting $v=1$ into \eqref{eq:thm5-diff} and \eqref{eq:thm5-eta}. Using Lemma \ref{lem:rho=1} yields the expression of  $\eta$ presented in Theorem \ref{thm:regretbound}.
	
\end{proof}

\section*{Appendix B}
\subsection*{B.1 Proof of Lemma \ref{lem:2formulations}}
\begin{proof}
	For any $h\in  {\cal H}$ and $a\in {\cal A}$, let ${\bf b}=g(h)$, $\tau_a=(h,a)$, and ${\bf b}_a = g(\tau_a)$. Note that
	\begin{align*}
		P_a^\top g(h) = P_a^\top {\bf{b}} = g({\tau _a}) = {{\bf{b}}_a}.
	\end{align*}
	Then by definition, we have
	\begin{align*}
		&D(i|h,a) = \rho {{\bf{b}}_a}(i) = T({{\bf{e}}_i}|{\bf{b}},a),\quad \forall i \in {\cal S}\\
		&D({\tau _a}|h,a) = 1 - \rho  = T({{\bf{b}}_a}|{\bf{b}},a).
	\end{align*}
	The Bellman equation for ${\cal C}(\rho)$ can be written as
	\begin{align} \label{eq:lem6-c}
		\varphi (h)= &\mathop {\max }\limits_{a \in {\cal A}} \{ R\left( {g(h),a} \right) + \beta \sum\limits_{i \in {\cal S}} T({{\bf{e}}_i}|{\bf{b}},a)\varphi (i) \notag \\
		&+ \beta T({{\bf{b}}_a}|{\bf{b}},a)\varphi ({\tau _a})  \}.
	\end{align}
	Note also that the Bellman equation for ${\cal B}(\rho)$ is
	\begin{align} \label{eq:lem6-b}
		\phi ({\bf{b}}) =& \mathop {\max }\limits_{a \in {\cal A}} \{ R\left( {{\bf{b}},a} \right) + \beta \sum\limits_{i \in {\cal S}} T({{\bf{e}}_i}|{\bf{b}},a)\phi ({{\bf{e}}_i}) \notag \\
		&+ \beta T({{\bf{b}}_a}|{\bf{b}},a)\phi ({{\bf{b}}_a})  \}.
	\end{align}
	Clearly, \eqref{eq:lem6-c} and \eqref{eq:lem6-b} are of the same form for any ${\bf b}=g(h)$. Since the Bellman equation has a unique solution, we conclude that $\phi({\bf b}) = \varphi({h})$ if ${\bf b}=g(h)$. This proves statement 1. Statement 2 follows immediately: for any $h,h'\in {\cal H}$ satisfying $g(h) = g(h')$, we have $\varphi (h') = \phi ({\bf{b}}) = \varphi (h)$. In addition, statement 1 implies that $\phi ({{\bf{e}}_i}) = \varphi (i)$ for all $i\in {\cal S}$ and $\phi ({{\bf{b}}_a}) = \varphi ({\tau _a})$ for any $a\in {\cal A}$. Then statement 3 can be verified using \eqref{eq:lem6-c} and \eqref{eq:lem6-b}.
	
\end{proof}

\subsection*{B.2 Proof of Theorem \ref{thm:TAL-bound}}
\begin{proof}
	We first construct an auxiliary MDP, denoted by ${\cal C}'(\rho ) = ({\cal H},{\cal A},\rho ,M,\bar R,\beta )$ to facilitate the analysis of the approximation error of TA($L$). In particular, the belief state space ${\cal H}$ , action space ${\cal A}$, SIRP  $\rho$, reward function  ${\bar R}$, and discount factor $\beta$  are identical to those of ${\cal C}(\rho)$. The transition kernel  $M$ is defined as follows:
	\begin{itemize}
		\item [(1)] For any $h\in {\cal G}_n$ with $n\neq L$, $M(h'|h,a) = D(h'|h,a)$, where $h'\in {\cal H}$ and $a\in {\cal A}$;
		\item [(2)] For any $h\in {\cal G}_L$, let $\tau_a = (h,a)$ and ${\bf b}_a = g(\tau_a)$, where $a\in {\cal A}$. Then 
		\begin{align*} 
			M(h'|h,a)  = \Pr \left( {h'|h,a} \right) = \begin{cases}
				1 - \rho ,& {\text{ if  }}h' =h\in {\cal G}_L\\
				\rho {\bf b}_a(i),& {\text{ if  }}h' = i \in {\cal S}\\
				0,&{\text{    otherwise.}}
			\end{cases}
		\end{align*}
	\end{itemize}
	
	On the one hand, since ${\cal C}(\rho)$ and ${\cal C}'(\rho)$ have the same state and action spaces, they share the same set of deterministic stationary policies, say $\Pi_h$. On the other hand, note that
	${\cal H} = {{\cal H}_L} \cup \left( { \cup _{n = L + 1}^\infty {{\cal G}_n}} \right)$ and that the probability of the position state transitioning from ${\cal H}_L$ to $\left( { \cup _{n = L + 1}^\infty {{\cal G}_n}} \right)$ is 0. If the initial position state is $h_0\in {\cal H}_L$, then ${\cal C}'(\rho)$ reduces to ${\cal C}_L(\rho)$. Denote by $\psi^\pi(\cdot)$ the value function of ${\cal C}'(\rho)$ under policy $\pi\in \Pi_h$ and $\psi(\cdot)$ the optimal value function of ${\cal C}'(\rho)$. Then
	\begin{align*}
		\psi (h) = {\varphi _L}(h),\quad \forall h \in {{\cal H}_L}.
	\end{align*}
	As we will show below, it is more convenient to compare two MDPs with the same state and action spaces. Hence ${\cal C}'(\rho)$  will serve as a bridge that connects ${\cal C}(\rho)$  and  ${\cal C}_L(\rho)$ so that we can characterize the approximation error by comparing $\varphi$ with $\psi$.
	
	For any $\pi \in \Pi_h$, using the same argument as in \eqref{eq:thm2-diff} yields
	\begin{align} \label{eq:thm8-diff}
		{\varphi ^\pi } - {\psi ^\pi } = \beta {\left( {I - \beta {M^\pi }} \right)^{ - 1}}\left( {{D^\pi } - {M^\pi }} \right){\varphi ^\pi }.
	\end{align}
	Let ${\xi ^\pi } = \left( {{D^\pi } - {M^\pi }} \right){\varphi ^\pi }$ and $h'=(h,\pi(h))$. Then ${\xi ^\pi }: {\cal H}\to \mathbb{R}$ is given by
	\begin{align}  \label{eq:8xi}
		{\xi ^\pi }(h) = \begin{cases}
			\beta (1 - \rho )\left[ {{\varphi ^\pi }(h') - {\varphi ^\pi }(h)} \right],&{\text{if }}h \in {{\cal G}_L}\\
			0,&{\text{otherwise.}}
		\end{cases}
	\end{align}
	The RHS of \eqref{eq:thm8-diff} can be viewed as the expected total discounted reward of a Markov reward process whose reward function is ${\xi ^\pi }$  and the Markov chain is governed by the transition kernel $M^\pi$. Denote by $\{x_t:t = 0,1,2,\cdots\}$ the Markov reward process. We then can write $\varphi^\pi - \psi^\pi$ in component form as, $\forall h \in {\cal H}$,
	\begin{align*} 
		{\varphi ^\pi }(h) - {\psi ^\pi }(h) = \sum\limits_{t = 0}^\infty  {{\beta ^t}\sum\limits_{y \in {\cal H}} {\Pr \left( {{x_t} = y|{x_0} = h} \right){\xi ^\pi }(y)} }.
	\end{align*}
	Substituting \eqref{eq:8xi} into the above formula yields
	\begin{align*}
		\left| {{\varphi ^\pi }(h) - {\psi ^\pi }(h)} \right| \le {\left\| {{\xi ^\pi }} \right\|_\infty }\sum\limits_{t = 0}^\infty  {{\beta ^t}\sum\limits_{y \in {{\cal G}_L}} {\Pr \left( {{x_t} = y|{x_0} = h} \right)} }. 
	\end{align*}
	For any $h\in {\cal G}_k$, it can be proved that
	\begin{align} \label{eq:8-OL}
		&\sum\limits_{t = 0}^\infty  {{\beta ^t}\sum\limits_{y \in {{\cal G}_L}} {\Pr \left( {{x_t} = y|{x_0} = h} \right)} } \notag \\
		=& \sum\limits_{t = L - k}^{L - 1} {{\beta ^t}{{(1 - \rho )}^t}}  + \frac{{{\beta ^L}{{(1 - \rho )}^L}}}{{1 - \beta }}: = O_L(k)
	\end{align} 
	{
		To see this, define
		\begin{align*}
			{p_n}(t) = \sum\limits_{y \in {{\cal G}_n}} {\Pr \left( {{x_t} = y|{x_0} = h} \right)}  = \Pr \left( {{x_t} \in {{\cal G}_n}|{x_0} = h} \right),
		\end{align*}
		where $n=0,1,2,\cdots,L$. Since $\{x_t:t=0,1,2,\cdots\}$ is governed by the transition kernel $M^\pi$, we have
		\begin{align} \label{eq:eL}
			{p_L}(t + 1) &= (1 - \rho )\left[ {{p_L}(t) + {p_{L - 1}}(t)} \right],\\ \label{eq:e0}
			{p_0}(t + 1) &= \rho \sum\limits_{n = 0}^L {{p_n}(t)}  = \rho ,\\ \label{eq:en}
			{p_n}(t + 1) &= (1 - \rho ){p_{n - 1}}(t),\quad n = 1, \cdots ,L - 1.
		\end{align}
		From \eqref{eq:e0} and \eqref{eq:en}, for $n\in \{1,2,\cdots,L-1 \}$ and $t\ge n$,
		\begin{align}
			{p_n}(t + 1) = {(1 - \rho )^n}{p_0}(t - n + 1) = \rho {(1 - \rho )^n}.
		\end{align}
		It follows that, for $t\ge L$,
		\begin{align*}
			{p_L}(t) = {(1 - \rho )^{t - L}}{p_L}(L) + {(1 - \rho )^L}\left[ {1 - {{(1 - \rho )}^{t - L}}} \right].
		\end{align*}
		Then we can derive 
		\begin{align}  \label{eq:e-sum}
			\sum\limits_{t = L}^\infty  {{\beta ^t}{p_L}(t)} = \frac{{{\beta ^L}\left[ {{p_L}(L) - {{(1 - \rho )}^L}} \right]}}{{1 - \beta (1 - \rho )}} + \frac{{{\beta ^L}{{(1 - \rho )}^L}}}{{1 - \beta }}.
		\end{align}
		Given that $x_0=h\in {\cal G}_k$, it is easy to verify that
		\begin{align}  \label{eq:e-pl}
			{p_L}(t) =  \begin{cases}
				0,&{\text{ if }}t < L - k\\
				{(1 - \rho )^t},&{\text{ if }}L - k \le t \le L
			\end{cases} 
		\end{align}
		Since $p_L(L) = (1-\rho)^L$, the first term in \eqref{eq:e-sum} is 0. Equation \eqref{eq:8-OL} follows immediately from \eqref{eq:e-sum} and \eqref{eq:e-pl}.
		
	}
	
	Note that $O_L(k)$ is independent of policy $\pi$. Since $\varphi^\pi\ge 0$ for all $\pi$ and $\rho$, we have
	\begin{align*}
		{\left\| {{\xi ^\pi }} \right\|_\infty } &= \mathop {\max }\limits_{h \in {{\cal G}_L}} \beta (1 - \rho )\left| {{\varphi ^\pi }(h') - {\varphi ^\pi }(h)} \right| \\
		& \le \beta (1 - \rho )\mathop {\max }\limits_{h \in {{\cal G}_L} \cup {{\cal G}_{L + 1}}} {\varphi ^\pi }(h)
	\end{align*}
	The inequality follows from the fact that $h\in {\cal G}_L$	and $h'\in {\cal G}_{L+1}$. Define $	{\delta _L} =  \mathop{\max }\limits_{h \in {{\cal G}_L} \cup {{\cal G}_{L + 1}}} \varphi (h).$
	Then ${\left\| {{\xi ^\pi }} \right\|_\infty } \le \beta (1 - \rho ){\delta _L}$ for all $\pi$. Therefore, for any $\pi\in \Pi_h$,
	\begin{align} \label{eq:8-absdiff}
		\left| {{\varphi ^\pi }(h) - {\psi ^\pi }(h)} \right| \le \beta (1 - \rho ){\delta _L}{O_L}(k),\quad \forall h \in {{\cal G}_k}.
	\end{align}
	Now, denote by $\pi$ and $\pi'$ the optimal policies for ${\cal C}(\rho)$ and ${\cal C}'(\rho)$, respectively. From \eqref{eq:8-absdiff}, for any $h\in {\cal G}_k,k=0,1,\cdots,L$,
	\begin{align} \label{eq:8left}
		\varphi (h) = {\varphi ^\pi }(h) &\le {\psi ^\pi }(h) + \beta (1 - \rho ){\delta _L}{O_L}(k) \notag \\
		& \le \psi (h) + \beta (1 - \rho ){\delta _L}{O_L}(k),\\ 
		\psi (h) = {\psi ^{\pi '}}(h) &\le {\varphi ^{\pi '}}(h) + \beta (1 - \rho ){\delta _L}{O_L}(k) \notag \\ \label{eq:8right}
		&\le \varphi (h) + \beta (1 - \rho ){\delta _L}{O_L}(k).
	\end{align}
	Putting together \eqref{eq:8left} and \eqref{eq:8right} yields
	\begin{align}
		\left| {\varphi (h) - \psi (h)} \right| \le \beta (1 - \rho ){\delta _L}{O_L}(k),\quad \forall h \in {{\cal G}_k}.
	\end{align}
	Since $\psi(h)=\varphi_L(h)$ for all $h\in {\cal H}_L$, the desired result follows immediately.
	
\end{proof}

\section*{Appendix C}
%Appendix proves theoretical results from Section \ref{sec:HOTA}.

\subsection*{C.1 Proof of Lemma \ref{lem:CLn}}
\begin{proof}
	Let $\mu_{L+n}$ denote the optimal policy for ${\cal C}_{L+n}(\rho)$. According to our construction, ${\cal C}^n_L(\rho)$ and ${\cal C}_{L+n}(\rho)$ having the same optimal action for every $h \in  \cup _{i = 0}^{n - 1}{{\cal Z}_i}$ means that
	\begin{align*}
		{{\cal Z}_i}  =  \begin{cases}
			{{\cal G}_0},& i = 0,\\
			\left\{ {\left( {h,\mu _{L+n}(h)} \right):h \in {{\cal Z}_{i - 1}}} \right\},& 1 \le i \le n,
		\end{cases}
	\end{align*}
	In addition, each $h \in  \cup _{i = 0}^{n - 1}{{\cal Z}_i}$ in MDP ${\cal C}^n_L(\rho)$ has only one available action, i.e., $\mu_{L+n}(h)$.
	
	The quantity $\varphi_{L+n}(h)$ is the expected total discounted reward generated by ${\cal C}_{L+n}(\rho)$ with policy $\mu_{L+n}$ and initial position state $h$. We thus can express $\varphi_{L+n}(h)$ as
	\begin{align} \label{eq:Lem12-1}
		{\varphi _{L + n}}(h) = &\sum\limits_{t = 0}^\infty  {{\beta ^t}} \sum\limits_{x \in {{\cal H}_{L + n}}} [ \bar R\left( x,\mu _{L + n}(x) \right) \times \notag \\
		&\Pr \left( {{h_t} = x,{\mu _{L + n}}(x)|{h_0} = h,{\mu _{L + n}}} \right)].
	\end{align}
	For ${\cal C}_{L+n}(\rho)$ with initial position state $h$ and being governed by policy $\mu_{L+n}$, ${{\cal H}_{L + n}} - {\cal H}_L^n$ is a set of redundant position states. Therefore, for any $h\in {\cal H}^n_L$ and $x\notin {\cal H}^n_L$,
	\begin{align} \label{eq:Lem12-2}
		\Pr \left( {{h_t} = x,{\mu _{L + n}}(x)|{h_0} = h,{\mu _{L + n}}} \right) = 0, \quad \forall t.
	\end{align}
	Denote by $\kappa$ a policy for ${\cal C}^n_L(\rho)$ with $\kappa (h) = {\mu _{L + n}}(h)$ for any $h \in {\cal H}_L^n$. Then \eqref{eq:Lem12-1} and \eqref{eq:Lem12-2} implies that
	\begin{align*} 
		&{\varphi _{L + n}}(h) = \sum\limits_{t = 0}^\infty  {{\beta ^t}} \sum\limits_{x \in {\cal H}_L^n} [ \bar R\left( {x,{\mu _{L + n}}(x)} \right) \times \notag \\
		& \qquad \qquad \quad \Pr \left( {{h_t} = x,{\mu _{L + n}}(x)|{h_0} = h,{\mu _{L + n}}} \right)]  \\ 
		=& \sum\limits_{t = 0}^\infty  {{\beta ^t}} \sum\limits_{x \in {\cal H}_L^n} {\Pr \left( {{h_t} = x,\kappa (x)|{h_0} = h,\kappa } \right)\bar R\left( {x,\kappa (x)} \right)}. 
	\end{align*}
	The last line of the above formula is the expected total discounted reward generated by ${\cal C}^n_L(\rho)$ with policy $\kappa$ and initial position state $h$. It follows that ${\varphi _{L + n}}(h) \le \varphi _L^n(h)$ for all $h\in {\cal H}^n_L$.
	
	We next show ${\varphi _{L + n}}(h) \ge \varphi _L^n(h)$ for all $h\in {\cal H}^n_L$. Denote by $\Pi^c$ the set of all deterministic policies for ${\cal C}_{L+n}(\rho)$. Given the optimal policy $\mu_{L+n}$, define $\Pi _n^c = \{ \kappa  \in {\Pi ^c}:\kappa (h) = {\mu _{L + n}}(h),\forall h \in  \cup _{i = 0}^{n - 1}{{\cal Z}_i}\} $. Then for any $h\in {\cal H}^n_L$,
	\begin{align*}
		&\varphi _L^n(h) \\
		&= \mathop {\max }\limits_{\kappa  \in \Pi _n^c} \sum\limits_{t = 0}^\infty  {{\beta ^t}} \sum\limits_{x \in {\cal H}_L^n} {\Pr \left\{ {{h_t} = x,\kappa (x)|{h_0} = h,\kappa } \right\}\bar R\left( {x,\kappa (x)} \right)} \\
		&\le \mathop {\max }\limits_{\kappa  \in {\Pi ^c}} \sum\limits_{t = 0}^\infty  {{\beta ^t}} \sum\limits_{x \in {\cal H}_L^n} {\Pr \left\{ {{h_t} = x,\kappa (x)|{h_0} = h,\kappa } \right\}\bar R\left( {x,\kappa (x)} \right)}  \\
		&= {\varphi _{L + n}}(h). 
	\end{align*}
	The desired result follows immediately.
	
\end{proof}

\subsection*{C.2 Proof of Theorem \ref{thm:policies}}
\begin{proof}
	For $k\in \{0,1,\cdots,n \}$, define
	\begin{align*}
		O_L(k) = \frac{{{\beta ^{L + k + 1}}{{(1 - \rho )}^{L + k + 1}}}}{{1 - \beta }} + \sum\limits_{i = 0}^k {{\beta ^{L + k - i}}{{(1 - \rho )}^{L + k - i}}}.
	\end{align*}
	Then according to Theorem \ref{thm:TAL-bound}, the approximation error of $\varphi_{L+k}(h)$ for $h\in {\cal G}_{k+1}$ is bounded by
	\begin{align} \label{eq:thm13-1}
		&\left| {{\varphi _{L + k}}(h) - \varphi (h)} \right| \le {\delta _{L + k}}O_L(k) \le \delta (n,L)O_L(k) \notag \\
		\le& \delta (n,L)O_L(0) \le \epsilon(n),\quad \forall h \in {{\cal G}_{k + 1}},
	\end{align} 
	where $k\in \{0,1,\cdots,n \}$. The penultimate inequality follows from the fact that $O_L(k)$ decreases with $k$:
	\begin{align*}
		O_L(k + 1) - O_L(k) 
		= \frac{{{\beta ^{L + k + 2}}{{(1 - \rho )}^{L + k + 1}}}}{{1 - \beta }}(1 - \rho  - 1) < 0.
	\end{align*}
	Denote by $\mu_{L+k}$ the optimal policy for ${\cal C}_{L+k}(\rho)$. Since the error bound increases monotonically with the layer index, \eqref{eq:thm13-1} also holds for any $h\in {\cal G}_i,i=0,1,\cdots,k$. Then according to Fact 1, we have
	\begin{itemize}
		\item [(a)] $\mu_{L+k}(h)$ is an optimal action for $h \in  \cup _{i = 0}^k{{\cal G}_i}$ in ${\cal C}(\rho)$, $k\in\{0,1,\cdots,n \}$.
		\item [(b)] ${\mu _{L + k}}(h) = {\mu _{L + k + 1}}(h)$ for $h \in  \cup _{i = 0}^k{{\cal G}_i}$, $k\in\{0,1,\cdots,n-1 \}$.
		\item [(c)] $\mu _L^k(h) = \mu _L^{k + 1}(h)$ for $h \in  \cup _{i = 0}^k{{\cal Z}_i}$, $k\in\{0,1,\cdots,n-1 \}$.		
	\end{itemize}	
	Note that (b) follows from (a); (c) was discussed when defining TA($L$). Statement (1) of the theorem is a special case of (a). 
	
	We next prove statement (2) using (a), (b), and (c). Note that ${\mu _L} = \mu _L^0(h)$. It follows that $\mu _L^1(h) = \mu _L^0(h) = {\mu _L}(h) = {\mu _{L + 1}}(h)$ for $h \in {{\cal G}_0} = {{\cal Z}_0}$; that is, ${\cal C}^1_{L}(\rho)$ and ${\cal C}_{L+1}(\rho)$ have the same optimal action for every $h\in {\cal Z}_0$. Then according to Lemma \ref{lem:CLn}, $\varphi _L^1(h) = {\varphi _{L + 1}}(h)$ for all $h\in {\cal H}^1_L$. This fact, together with \eqref{eq:thm13-1} (the case of $k=1$), implies that $\mu^1_L(h)$ is an optimal action for $h \in {{\cal Z}_0} \cup {{\cal Z}_1}$ in ${\cal C}(\rho)$. That is, ${\mu _{L + 1}}(h) = \mu _L^1(h)$  for $h \in {{\cal Z}_0} \cup {{\cal Z}_1}$. Repeating the above argument until the case of $k=n$ yields statement (2).
	
\end{proof}

\subsection*{C.3 Proof of Theorem \ref{thm:eps-optimal}}
\begin{proof}
	Let $\mu$ be a policy that takes optimal actions in all reachable states in $\cup _{k = 0}^n{{\cal G}_k}$. Define $\tau_1=(h_0,\mu(h_0))$ and $\tau_{i+1}=(\tau_i,\mu(\tau_i))$ for $i\ge 1$. That is, $\tau_i$ is the reachable descendant of $h_0$ at layer $i$ under policy $\mu$. Then for any $h_0\in \mathcal{G}_0$,
	\begin{align} \label{eq:14-1}
		0\le & \varphi(h_0) - \varphi^\mu(h_0) =  \beta (1-\rho)[\varphi(\tau_1) - \varphi^\mu(\tau_1)] + \notag \\
		&\beta \sum_{h'\in \mathcal{G}_0}D(h'|h,a^*)[\varphi(h') - \varphi^\mu(h')],
	\end{align}
	where $a^*$ is the optimal action for $h_0$. The equality holds because $h_0\in\mathcal{G}_0$ and thus $\mu(h_0)=a^*$. Define
	\begin{align*}
		\Delta_i = \max_{h \in {\cal G}_i} |\varphi(h) - \varphi^\mu(h)|,\quad i=0,1,2,\cdots
	\end{align*}
	Then \eqref{eq:14-1} implies
	\begin{align*}
		|\varphi(h_0) - \varphi^\mu(h_0)| \le \beta \rho\Delta_0 + \beta (1-\rho)|\varphi(\tau_1) - \varphi^\mu(\tau_1)|.
	\end{align*}
	Note that the above inequality holds for all $h_0\in \mathcal{G}_0$. Therefore,
	\begin{align} \label{eq:14-3}
		\Delta_0 \le \beta \rho\Delta_0 + \beta (1-\rho)|\varphi(\tau_1) - \varphi^\mu(\tau_1)|.
	\end{align}
	Applying a similar argument as in \eqref{eq:14-1} yields
	\begin{align} \label{eq:14-4}
		\Delta_0 &\le \beta \rho\Delta_0 + \beta \rho \beta (1-\rho)\Delta_0 + \beta^2 (1-\rho)^2|\varphi(\tau_2) - \varphi^\mu(\tau_2)| \notag\\
		& \le \beta\rho \sum_{k=0}^{n} \beta^k (1-\rho)^k \Delta_0 + \beta^{n+1}(1-\rho)^{n+1}\Delta_{n+1} \notag\\
		&=\beta \rho \frac{1-\beta^{n+1}(1-\rho)^{n+1}}{1-\beta(1-\rho)}\Delta_0 + \beta^{n+1}(1-\rho)^{n+1}\Delta_{n+1}.
	\end{align}	
	Since $K=\max_{h,a}|\bar{R}(h,a)|$, we have
	\begin{align*}
		-\frac{K}{1-\beta} \le \varphi^u(h)\le \varphi(h)\le \frac{K}{1-\beta},\quad \forall h.
	\end{align*}
	It follows that $\Delta_{n+1} \le 2K/(1-\beta)$. We thus can obtain from \eqref{eq:14-4} that
	\begin{align} \label{eq:14-5}
		\Delta_0\le \frac{\beta^{n+1}(1-\rho)^{n+1}[1-\beta(1-\rho)]}{1-\beta +\beta \rho\beta^{n+1}(1-\rho)^{n+1} } \frac{2K}{1-\beta}.
	\end{align}	
	Given $\mathcal{C}(\rho)$ with initial position state $h_0\in \mathcal{G}_0$, policy $\mu$ achieves $\varepsilon$-optimal value if $|\varphi(h_0) - \varphi^\mu(h_0)| \le \Delta_0\le \varepsilon$. This is guaranteed if the RHS of \eqref{eq:14-5} is smaller than or equal to $\varepsilon$, which implies that
	\begin{align}
		\beta^{n+1}(1-\rho)^{n+1} &\le \frac{\varepsilon(1-\beta)^2 }{2K[1-\beta(1-\rho)]-\varepsilon\beta\rho(1-\beta) } \notag \\
		& \le \frac{\varepsilon(1-\beta) }{2K-\varepsilon\beta\rho }.
	\end{align}
	Therefore, $|\varphi(h_0) - \varphi^\mu(h_0)|\le \varepsilon$ if
	\begin{align*}
		n\ge \frac{\log \varepsilon(1-\beta)/(2K-\varepsilon\beta\rho) }{\log \beta(1-\rho)} -1.
	\end{align*}
	We thus conclude that a policy achieves $\varepsilon$-optimal value for any initial state $h_0\in\mathcal{G}_0$ if it takes optimal actions for all reachable states in $\cup _{k = 0}^n{{\cal G}_k}$, with $n$ satisfying the above inequality.
	Hence, given a satisfied $n$, TA($L+n$) and TA($n,L$) policies achieve $\varepsilon$-optimal value if $L$ satisfies the inequality stated in Theorem \ref{thm:policies}. Note that $\delta (n,L)\le K/(1-\beta)$ for any $n$ and $L$. Hence the inequality of Theorem \ref{thm:policies} holds if
	\begin{align*}
		\frac{K}{1-\beta}\left( {\frac{{{\beta ^{L + 1}}{{(1 - \rho )}^{L + 1}}}}{{1 - \beta }} + {\beta ^L}{{(1 - \rho )}^L}} \right) \le \epsilon(n),
	\end{align*}
	which implies that
	\begin{align*}
		L\ge \frac{\log\epsilon(n)(1-\beta)^2/K(1-\beta \rho) }{\log \beta(1-\rho)}.
	\end{align*}
	This completes the proof.	
\end{proof}

\section*{Appendix D}
%This Appendix proves Theorem \ref{thm:NVI} and Theorem \ref{thm:complexity} from Section \ref{sec:NVI}.

\subsection*{D.1 Proof of Theorem \ref{thm:NVI}}

\begin{proof}
	Consider an arbitrary definition of $\{\mathcal{X}_i \}$ that satisfies $\mathcal{X}_d = \mathcal{H}_L$ and $\mathcal{X}_i \subseteq \mathcal{X}_{i+1}$. For any bounded real-valued function $\psi:{\cal H}_L\to \mathbb{R} $, define the operator $F_k$ as
	\begin{align*}
		{F_k}\psi (h) =  \begin{cases}
			\mathop {\max }\limits_{a \in {\cal A}} \left\{ {\bar R(h,a) + \beta \sum\limits_{h' \in {\cal H}{_L}} {{D_L}\left( {h'|h,a} \right)\psi (h')} } \right\}, \\ 
			\qquad \ {\text{  if }}h \in {{\cal X}_k}\\
			\psi (h),{\text{  otherwise}}
		\end{cases} 
	\end{align*}
	where $k\in \{1,2,\cdots,d \}$. That is, $F_k$ is the operator corresponding to the $(d-k+1)$-th inner iteration within an outer iteration. It is well-known that $F_d$ is a contraction mapping. We thus have, for any $h\in \mathcal{H}_L$,
	\begin{align} \label{eq:FL-contract}
		\left| {{F_k}{\psi ^n}(h) - {\varphi _L}(h)} \right|&=\left| {{F_d}{\psi ^n}(h) - {F_d}{\varphi _L}(h)} \right| \notag \\
		& \le \beta \max\limits_{h' \in {{\cal H}_L}}  \left| {{\psi ^n}(h') - {\varphi _L}(h')} \right|.
	\end{align}
	Using a similar argument as that establishes \eqref{eq:FL-contract}, we have
	\begin{align*}
		\begin{cases}
			\left| {{F_k}{\psi ^n}(h) - {\varphi _L}(h)} \right| \le \beta \max\limits_{h' \in {{\cal H}_L}} \left| {{\psi ^n}(h') - {\varphi _L}(h')} \right|, \forall  h\in \mathcal{X}_k \\
			\left| {{F_k}{\psi ^n}(h) - {\varphi _L}(h)} \right|\le \max\limits_{h' \in {{\cal H}_L}}\left| {{\psi ^n}(h) - {\varphi _L}(h)} \right|, \forall h\notin \mathcal{X}_k.
		\end{cases} 	
	\end{align*}
	for $1\le k <d$.		
	Let $m=kd+1$ for an arbitrary $k$. Then
	\begin{align}
		\left|\psi^{m+d}(h)-\varphi_L(h) \right|&=\left|F_dF_1\cdots F_{d-2}F_{d-1}\psi^{m}(h)-\varphi_L(h) \right| \nonumber\\
		&\le \beta \max\limits_{h' \in {{\cal H}_L}} \left| {{\psi ^m}(h') - {\varphi _L}(h')} \right|,
	\end{align}
	Note that the above inequality holds for any $h\in \mathcal{H}_L$. Hence for any $k$, 
	\begin{align*}
		\left|\left|\psi^{(k+1)d+1}(h)-\varphi_L(h) \right|\right|_\infty \le \beta  \left|\left| {{\psi ^{kd+1}}(h') - {\varphi _L}(h')} \right|\right|_\infty.
	\end{align*}
	The convergence of NVI thus follows immediately. 
	
	For statement 2, consider an arbitrary $k\ge 0$ and define
	\begin{align*}
		\Delta _i^n = \mathop {\max }\limits_{h \in {{\cal G}_i}} \left| {{\psi^{kd+n}}(h) - {\psi^{kd+n - 1}}(h)} \right|,\quad {\Delta ^n} = \mathop {\max }\limits_i \Delta _i^n.
	\end{align*}
	Note that the values of states in $\mathcal{H}_L-\mathcal{H}_1$ are updated every $d$ iterations. Hence $\Delta^n_i=0$ if $1\le n <d$ and $i\ge 2$.
	For any $h\in \mathcal{G}_0\subset \mathcal{X}_1$, we have
	\begin{align*}
		\left| {{\psi^{j+1}}(h) - {\psi^{j}}(h)} \right| \le \beta \rho \max_{h \in {\cal G}_0}\left| {{\psi^{j+1}}(h) - {\psi^{j}}(h)} \right|& \\
		+ \beta(1-\rho)\max_{h \in {\cal G}_1}\left| {{\psi^{j+1}}(h) - {\psi^{j}}(h)} \right|&,
	\end{align*}
	where $a$ is the greedy action w.r.t. $\psi^j$. It follows that
	\begin{align} \label{eq:delta_0}
		\Delta^{n+1}_0 \le \beta \rho \Delta^n_0 + \beta (1-\rho) \Delta^n_1 \le \beta\Delta^n.
	\end{align}
	Using a similar argument, it is easy to verified that $\Delta^{n+1}\le \Delta^n$ for all $1\le n <d$. Then we can derive from \eqref{eq:delta_0} that
	\begin{align} 
		\Delta^{d}_0 &\le \beta \rho \Delta^{d-1}_0 + \beta (1-\rho) \Delta^1 \notag\\
		&\le 
		{\beta ^{d-1}}{\rho ^{d-1}}{\Delta ^1} + \beta (1 - \rho )\sum\limits_{k = 0}^{d - 2} {{\beta ^k}{\rho ^k}{\Delta ^1}} \nonumber\\
		&= \left[ {{\beta ^{d-1}}{\rho ^{d-1}} + \beta (1 - \rho )\frac{{1 - {\beta ^{d-1}}{\rho ^{d-1}}}}{{1 - \beta \rho }}} \right]{\Delta ^1}.
	\end{align}
	At time step $kd+d+1$, updates are carried out over $\mathcal{X}_d=\mathcal{H}_L$. Therefore, for any $h\in \mathcal{H}_L$,
	\begin{align*}
		&\left| {{\psi^{kd+d+1 }}(h) - {\psi^{kd+d}}(h)} \right| \le \beta \rho \Delta _0^{d } + \beta (1 - \rho ){\Delta ^{d }}\\
		\le& \left[ {{\beta ^{d }}{\rho ^{d }} + \beta (1 - \rho )\frac{{\beta \rho  - {\beta ^{d }}{\rho ^{d }}}}{{1 - \beta \rho }} + \beta (1 - \rho )} \right]{\Delta ^1}\\
		=& \left[ {\frac{{\beta (1 - \rho )}}{{1 - \beta \rho }} + \frac{{{\beta ^{d}}{\rho ^{d}}(1 - \beta )}}{{1 - \beta \rho }}} \right]{\Delta ^1}.
	\end{align*}
	The desired result follows immediately.
\end{proof}

\subsection*{D.2 Proof of Theorem \ref{thm:complexity}}
\begin{proof}
	Given $\{\mathcal{X}_i \}$ defined by \eqref{eq:nestedset}, $|\mathcal{X}_1|=|\mathcal{S}|(1+|\mathcal{A}|)$ and $|\mathcal{X}_d|=|\mathcal{H}_L|$ is given by \eqref{eq:HL}.
	Hence each outer iteration consists of $O\left(d|\mathcal{X}_1|^2|\mathcal{A}|+|\mathcal{X}_d|^2|\mathcal{A}| \right)$ operations and the running time for each outer iteration is
	\begin{align} \label{eq:bigO}
		O\left( |\mathcal{S}|^2\left(d|\mathcal{A}|^3+|\mathcal{A}|^{2L+1}\right) \right).
	\end{align} 
	
	Next, we analyze the number of outer iterations required for convergence to an $\varepsilon$-optimal policy. The basic idea is similar to that outlined by \cite{littman1995complexity}. To ensure comprehensive coverage, we present a complete proof here.
	
	Let $K=\max_{h,a}|\bar{R}(h,a)|$. Then the value function of any policy is bounded in max norm by $K/(1-\beta)\triangleq K_v$. Let $n$ be the counter of outer iterations, then according to statement (2) of Theorem \ref{thm:NVI}, 
	\begin{align*}
		\left|\left| {{\psi ^{nd + 1}} - {\psi^{nd}}} \right|\right|_\infty \le \left[ {\frac{{\beta (1 - \rho )}}{{1 - \beta \rho }} + \frac{{{\beta ^{d}}{\rho ^{d}}(1 - \beta )}}{{1 - \beta \rho }}} \right]^n 2K_v .
	\end{align*}
	According to \cite{williams1993tight}, the greedy policy in terms of $\psi^{nd}$ is $\varepsilon$-optimal if $||{{\psi ^{nd + 1}} - {\psi^{nd}}}||_\infty \le \varepsilon (1-\beta)/2\beta$. This condition is guaranteed if
	\begin{align}
		\left[ {\frac{{\beta (1 - \rho )}}{{1 - \beta \rho }} + \frac{{{\beta ^{d}}{\rho ^{d}}(1 - \beta )}}{{1 - \beta \rho }}} \right]^n 2K_v \le \frac{\varepsilon (1-\beta)}{2\beta}.
	\end{align}
	It follows that the greedy policy in terms of $\psi^{nd}$ is $\varepsilon$-optimal for any $n$ satisfying
	\begin{align*}
		n \ge n^* \triangleq \frac{{\log {{{\varepsilon(1 - \beta )}}}/{{2K_v\beta }}}}{{\log \left[ {\frac{{\beta (1 - \rho )}}{{1 - \beta \rho }} + \frac{{{\beta ^{d}}{\rho ^{d}}(1 - \beta )}}{{1 - \beta \rho }}} \right]}} . 
	\end{align*}
	Using the fact that $\log x < x-1<0$ for $x\in (0,1)$, we can easily verify that
	\begin{align} \label{eq:n*}
		n^*\le \frac{\log 1/\varepsilon(1-\beta) + \log 2K_v }{1-\beta} \frac{1-\beta\rho}{1-\beta^{d}\rho^{d}}.
	\end{align}
	Combining \eqref{eq:bigO} and \eqref{eq:n*} yields the desired result. Note that we omit $\log 2K_v$ in the final expression because it is a constant.
\end{proof}

\section*{Appendix E}
This Appendix provides an introduction to the experimental MDP in Section \ref{sec:exp} (Table \ref{tab:hota} and Fig. \ref{fig:exp}).

As shown in Fig.~\ref{fig:boat}, consider an unmanned boat tasked with remaining within a designated target water area to perform monitoring activities. The boat’s movement is prone to failure with a certain probability due to the influence of water waves.
The boat is permitted to navigate within a predefined operational area, and leaving this area results in the termination of the task. Its position is tracked by a remote sensor, which transmits the location information to the boat at each time step. However, the long distance between the sensor and the boat introduces a communication failure probability of $1-\rho$. To simplify the problem, we discretize the water area into 9 segments, resulting in an MDP with 9 states and 4 actions.

The simplified MDP environment is shown in~\ref{fig:exmple}. In particular, we divide the permission area into eight distinct sections, each corresponding to a specific position. The obstacle area and the region outside the permission area are grouped together as a single section, representing the ninth position.  The boat has four actions at any position: move left, move right, move up, and move down. The aim is to control the boat to keep moving along a clockwise direction in the white region ($1 \to 2 \to  \cdots  \to 8 \to 1 \to  \cdots $). The task is terminated once the boat enters the gray region (including the gray region in the central part and the gray region in the marginal part).

\begin{figure}[!t]
	\centering
	\begin{subfigure}[b]{0.4\linewidth}
		\includegraphics[width=\linewidth]{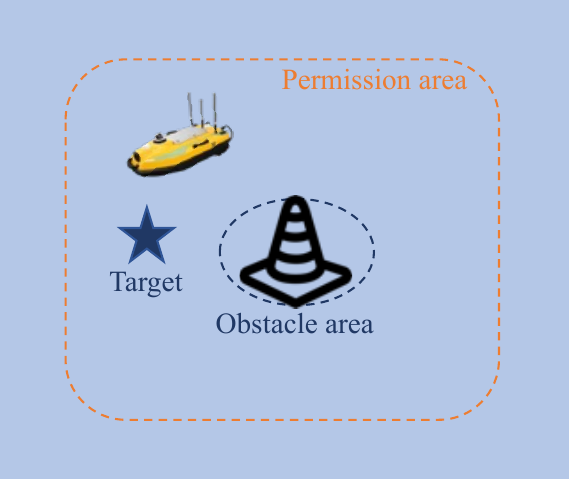}
		\caption{Illustration of practical environment.}
		\label{fig:boat}
	\end{subfigure}
	\begin{subfigure}[b]{0.37\linewidth}
		\includegraphics[width=\linewidth]{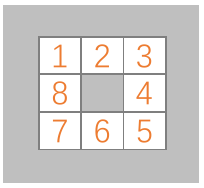}
		\caption{Simplified MDP environment.}
		\label{fig:exmple}
	\end{subfigure}
	
	\caption{Environment of the unmanned Boat MDP}
	\label{fig:env}
\end{figure}

Given a positive integer $n$, we use $[n]=\{1,2,\cdots,n\}$  to denote the set of integers between 1 and $n$. Mathematically, the MDP is formulated as follows:
\begin{itemize}
	\item State space ${\cal S} = [9]$, where states 1-8 correspond to positions 1-8 in the white region and state 9 corresponds to the gray region (terminated state).
	\item Action space ${\cal A} = [4]$: action 1, move left; action 2, move down; action 3, move right; action 4, move up.
	\item Transition kernel. For any state $s\in [8]$, denote by $m_s\in {\cal A}$ the clockwise action of state $s$ (that is, move left in states 1 and 2, move down in states 3 and 4, move right in states 5 and 6, move up in states 7 and 8) and $m^-_s\in {\cal A}$ the anti-clockwise action. Denote by $s_t$ and $a_t$ the state and action at time $t$, respectively. The transition kernel is defined as follows: (1) for $s_t = 9$, $\Pr({s_{t + 1}}=j|{s_t} = 9,{a_t}) = 1$ if $j=9$ and $\Pr({s_{t + 1}} = j|{s_t} = 9,{a_t}) = 0$ otherwise; (2) for $s_t\in [8]$,
	\begin{align*}
	\Pr({s_{t + 1}} = j|{s_t} = i,{a_t} = {m_i}) =  \begin{cases}
	0.5,&{\text{  if }}j = i \in [8]\\
	0.5,&{\text{  if }}i \in [7]{\text{ and }}j = i + 1\\
	0.5,&{\text{  if }}i = 8{\text{ and }}j = 1\\
	0,&{\text{   otherwise}}
	\end{cases} 
	\end{align*}
	\begin{align*}
	\Pr({s_{t + 1}} = j|{s_t} = i,{a_t} = {m^-_i}) =  \begin{cases}
	0.5,&{\text{  if }}j = i \in [8]\\
	0.5,&{\text{  if }}2\le i\le 8{\text{ and }}j = i - 1\\
	0.5,&{\text{  if }}i = 1{\text{ and }}j = 8\\
	0,&{\text{   otherwise}}
	\end{cases} 
	\end{align*}
	\begin{align*}
	\Pr({s_{t + 1}} = j|{s_t} = i,{a_t} ) =  \begin{cases}
	1,&{\text{  if }}j =  9 \text{ and } a_t\notin \{m_i, m^-_i\}\\
	0,&{\text{   otherwise}}
	\end{cases} 
	\end{align*}
	\item Reward function: $r(s,a)=20$ if $s\in [8],a=m_s$ and $r(s,a)=0$ otherwise.
	\item Discount factor: $\beta = 0.95$.
\end{itemize}

\vskip 0.2in
\bibliography{sample}

\end{document}